\documentclass{article}

\PassOptionsToPackage{numbers}{natbib}

\usepackage[final]{neurips_2024}




\usepackage[utf8]{inputenc} 
\usepackage[T1]{fontenc} 
\usepackage[pdftex,colorlinks,linkcolor=teal,citecolor=teal,filecolor=teal,urlcolor=teal]{hyperref}\usepackage{url} 
\usepackage{booktabs} 
\usepackage{amsfonts} 
\usepackage{nicefrac} 
\usepackage{microtype} 
\usepackage{xcolor} 

\usepackage{tikz}
\usepackage{graphicx}
\usepackage{nicefrac}
\usepackage{aliascnt}
\usepackage{amssymb,amsmath,amsthm,sectsty,url}
\usepackage{cleveref}
\usepackage{color}
\usepackage[ruled,vlined]{algorithm2e}
\usepackage{booktabs}
\usepackage{multirow}
\usepackage{makecell}
\usepackage{multicol}
\usepackage{subcaption}

\usepackage{float}


%
\SetKwInput{KwInput}{input}
\SetKwInput{KwOutput}{Output}
\SetKwInput{KwReturn}{return}

\newtheorem{theorem}{Theorem}[section]
\crefname{theorem}{Theorem}{Theorems}

\newaliascnt{lemma}{theorem}
\newtheorem{lemma}[lemma]{Lemma}
\aliascntresetthe{lemma}
\crefname{lemma}{Lemma}{Lemmas}

\newaliascnt{proposition}{theorem}

\aliascntresetthe{proposition}
\crefname{proposition}{Proposition}{Propositions}

\newaliascnt{corollary}{theorem}

\aliascntresetthe{corollary}
\crefname{corollary}{Corollary}{Corollaries}

\newaliascnt{fact}{theorem}

\aliascntresetthe{fact}
\crefname{fact}{Fact}{Facts}

\newaliascnt{definition}{theorem}

\aliascntresetthe{definition}
\crefname{definition}{Definition}{Definitions}

\newaliascnt{remark}{theorem}

\aliascntresetthe{remark}
\crefname{remark}{Remark}{Remarks}

\newaliascnt{conjecture}{theorem}

\aliascntresetthe{conjecture}
\crefname{conjecture}{Conjecture}{Conjectures}

\newaliascnt{claim}{theorem}

\aliascntresetthe{claim}
\crefname{claim}{Claim}{Claims}

\newaliascnt{question}{theorem}

\aliascntresetthe{question}
\crefname{question}{Question}{Questions}

\newaliascnt{exercise}{theorem}

\aliascntresetthe{exercise}
\crefname{exercise}{Exercise}{Exercises}

\newaliascnt{example}{theorem}

\aliascntresetthe{example}
\crefname{example}{Example}{Examples}

\newaliascnt{notation}{theorem}

\aliascntresetthe{notation}
\crefname{notation}{Notation}{Notations}

\newaliascnt{problem}{theorem}

\aliascntresetthe{problem}
\crefname{problem}{Problem}{Problems}

 
\newcommand{\norm}[1]{\lVert#1\rVert}

\def\E{\mathbb E}



\newcommand{\R}{\mathbb R}



\definecolor{methodred}{RGB}{176,36,24}
\definecolor{methodpurple}{RGB}{104,52,154}
\definecolor{methodblue}{RGB}{79,113,190}
\definecolor{methodgreen}{RGB}{94,169,63}

\definecolor{goodgreen}{RGB}{74,146,59}
\definecolor{badred}{RGB}{196,46,34}

\usepackage{makecell}
\usepackage{chngpage}

\newcommand{\alg}{DP-KPS}



\title{Private Text Generation by Seeding Large Language Model Prompts}

\author{%
 Supriya Nagesh\thanks{Authors contributed equally. Correspondence to: \texttt{nsupriy@amazon.com} and \texttt{justc@mit.edu}}\ \ \thanks{Amazon}
  \and
  Justin Y.~Chen\footnotemark[1]\ \ \thanks{MIT. Work done while an intern in Amazon}
  \and
  Nina Mishra\footnotemark[2]
  \and
  Tal Wagner\thanks{Tel-Aviv University and Amazon}
}

\begin{document}
\maketitle

\begin{abstract}
We explore how private synthetic text can be generated by suitably prompting a large language model (LLM).
This addresses a challenge for organizations like hospitals, which hold sensitive text data like patient medical records, and wish to share it in order to train machine learning models for medical tasks, while preserving patient privacy.  
Methods that rely on training or finetuning a model may be out of reach, either due to API limits of third-party LLMs, or due to ethical and legal prohibitions on sharing the private data with the LLM itself. 

We propose Differentially Private Keyphrase Prompt Seeding (\alg), a method that generates a private synthetic text corpus from a sensitive input corpus, by accessing an LLM only through privatized prompts. 
It is based on seeding the prompts with private samples from a distribution over phrase embeddings, thus capturing the input corpus while achieving requisite output diversity and maintaining differential privacy.
We evaluate \alg\ on downstream ML text classification tasks, and show that the corpora it generates preserve much of the predictive power of the original ones.   
Our findings offer hope that institutions can reap ML insights by privately sharing data with simple prompts and little compute. 
\end{abstract}


\section{Introduction}
\label{sec:intro}

Organizations have large text corpora that they wish to share with others for analytical insights.  When this data contains information about people, privacy considerations abound.  For example, a hospital that collects clinical notes may wish to predict who is at risk of developing a particular health condition.  
They wish to share their clinical notes so others can train a model.  However, privacy and legal regulations prevent data sharing in the clear.

\noindent\textbf{Motivating case.}
We are motivated by the following true story. A hospital has an unexpected rise in deaths due to a particular condition and hence wants to predict who is at risk so preventative measures can be taken. An external organization is consulted to build an ML model, since no in-hospital ML expertise is available. Since patient records are private and hospitals do not know how to make records private, one year passes, data is not shared, and the surge in deaths remains unaddressed.

Thus, we are driven to find ways to generate useful yet privacy-preserving synthetic records. 'Useful', in this context, means the generated corpus of synthetic records could serve as an alternative training set for downstream ML models that would normally be trained on real textual records. 

How should `privacy' be defined?
One common practice 
is anonymization, where personally identifiable information (PII) such as name, address, social security number is deliberately concealed.  Tools for anonymizing text such as~\cite{philter,comprehendMedical} can identify PII to anonymize with high accuracy.  However, even if PII Is obscured, the rest of the text may still be stitched together to identify an individual by a unique set of traits (e.g., a set of medical conditions).  Moreover, anonymized data may be joined with other public datasets to re-identify individuals, such as car accident data~\cite{car_accidents} or voting records~\cite{latanya}.  
This led researchers to look for more rigorous notions of privacy that provide formal guarantees. 

\noindent\textbf{Differential Privacy} (DP)
\cite{dwork2006calibrating} has emerged as a rigorous alternative, and is now considered the gold standard for privacy in data analysis and machine learning. Intuitively, it ensures that the result of a computation over a dataset is not significantly altered by the inclusion of any individual in the dataset, and thus information about any individual cannot be recovered from the result. 

DP can be enforced at different parts of the data analysis pipeline. Perhaps the most challenging one is \emph{synthetic data generation} \cite{ponomareva2023dp}. 
The goal is to create a synthetic dataset, which is DP w.r.t.~the original private dataset, yet preserves its essential global properties for purposes of analysis and machine learning. Due to the post-processing property of DP, the synthetic dataset can be released and used as a proxy in downstream computations without further privacy concerns. 
In this work, our goal is to create synthetic private collections of text documents.

\noindent\textbf{Large Language Models} (LLMs)
are by now renowned for their ability to generate high-quality natural language text. 
Thus, it is natural to seek to harness their power for creating private synthetic text \cite{hu2023sok}, and much work has been dedicated to training them privately (cf.~\Cref{sec:related}). 
However, in many real scenarios, these methods are difficult to apply. Due to the extreme cost and complexity of training and maintaining LLMs, most organizations use proprietary LLM services offered by third parties, who often limit the ability to train or modify the LLM. Moreover, the LLM, being owned by a third party, may itself be considered a privacy risk, and client organizations are often unable to share their sensitive data with it. 
In fact, even publicly available medical datasets widely used in research, such as MIMIC \cite{goldberger2000physiobank}, enforce limits on using their data in LLM prompts in the clear.\footnote{\url{https://physionet.org/news/post/gpt-responsible-use}}

As a result, several works have recently emphasized the necessity and importance of developing machine learning methods that interact with large pre-trained models as \emph{interfaces}, accessing them only through inference or prompt, and feeding them only with data which is safe for public release \cite{pryzant-etal-2023-automatic,duan2023flocks,lin2023differentially,cohen2023hot}. 
Such methods enable a wider audience of prospective users---including those with limited access and resources---to harness state-of-the-art machine learning for their ends, alleviating many logistical and regulatory barriers. 
Our work subscribes to this emerging trend. 

\noindent\textbf{Problem specification.}
The foregoing discussion leads us to the following challenge: 
Given a private dataset $\mathcal D$ of text documents, create a privatized synthetic dataset of text documents, which can be used as a proxy for $\mathcal D$ in downstream machine learning. At our disposal is an LLM, but it can be accessed only through a prompting interface, and is only allowed to see data which is already privatized. Privacy is defined as document-level DP, where each text document in $\mathcal D$ corresponds to an individual (e.g., a medical record), and thus the output of our method is required to be insensitive to the inclusion of any single entire document in $\mathcal D$. See \Cref{sec:prelim} for the formal definition.


\subsection{Our Method: \alg}\label{sec:intro_method}
We outline our method, Differentially Private  Keyphrase Prompt Seeding. See also \Cref{fig:method2,fig:flow}.

Our starting point is the recent work \cite{eldan2023tinystories}. 
Unrelated to privacy, their goal was to generate a synthetic dataset of children's stories with an LLM.  
A major challenge turned out to be \emph{output diversity}:
they found that when prompting the LLM for stories without further specifics, even at a high setting of the temperature parameter, the resulting stories were not diverse enough to accomplish the downstream task (which in their case was to train a smaller language model). Their solution was to prompt the LLM to write a story that contains three specific words (e.g., the adjective `bad', the noun `wolf' and the verb `huff'). By prompting each time with different words, drawn at random from a pre-specified vocabulary, they achieved the requisite level of diversity in the synthetic dataset of stories. 

We wish to adapt this idea, of \emph{seeding the LLM prompt with keyphrases},\footnote{We opt to use \emph{keyphrase} instead of \emph{keyword}, since in many application contexts, vocabulary elements consist of several words (e.g., ``congestive heart failure'' in medical records).} to our problem of generating DP text. 
While \cite{eldan2023tinystories} could choose their keyphrases at random (having no privacy constraints, nor a given dataset to begin with), our challenge is to choose them in a way that captures the given private dataset $\mathcal D$, while adhering to DP.  The general approach is illustrated in Figure~\ref{fig:method2}.


To this end, we use recent results on private kernel density estimation. 
KDE is a common way to model a finite dataset in $\R^d$ as a probability distribution, which can then be sampled from. 
Recently, \cite{wagner2023fast} gave a DP KDE method for high-dimensional data. 
To us this suggests the following plan: 
\begin{enumerate}
    \item Take a public vocabulary (say, an English dictionary);
    \item Embed it in $\R^d$ using pre-trained term embeddings (following \cite{mikolov2013efficient});
    \item Construct a DP KDE distribution over vocabulary terms from the private dataset $\mathcal D$, using \cite{wagner2023fast};
    \item Draw samples of terms from the DP KDE distribution with which to seed the LLM prompts. 
\end{enumerate}   
The sampled terms would capture $\mathcal D$, since they are drawn from a distribution that approximates its KDE, while also satisfying differentially privacy.

This plan entails several challenges. 
For one, while \cite{wagner2023fast} gave an efficient way to privately estimate the KDE at any given point in $\R^d$, they did not provide a way to efficiently draw samples from the DP KDE distribution, and the problem appears infeasible in high dimensions. 
Our observation here is that the vocabulary, while large, is still feasible to enumerate over linearly. Thus, we can draw a term from the DP KDE distribution by querying the KDE of each term in the vocabulary, and then sampling from the resulting multinomial distribution.

Beyond sampling a single term, we wish to sample a sequence of terms. Much of the signal we want to capture in $\mathcal D$ may lie not just in individual terms, but in the joint distribution of co-occurring terms, e.g., ‘lung’ and ‘pulmonary embolism’ and ‘shortness of breath’. While enumerating over a large vocabulary to sample one keyphrase is feasible, enumerating over $k$-tuples to sample a sequence becomes infeasible, even for small values of $k$. 
To address this, we develop ways to sequentially sample dependent keyphrases, using \emph{ensembles} of DP KDEs.

By prompting the LLM with a DP sequence of keyphrases, a text record is privately generated.  A final consideration is that the text may not be written in the style of the data recipient (say, a different hospital).  
To overcome this, the final step in our method employs off-the-shelf domain adaption.
Ultimately, the quality of a set of synthetic texts is quantified by how effectively an external party can train an ML model.  Our experiments demonstrate how an ML model trained on a private synthetic text corpus generated by \alg\ can achieve high accuracy performance.



\begin{figure*}[t]
\vskip 0.2in
\begin{center}
\centerline{\includegraphics[width=\textwidth]{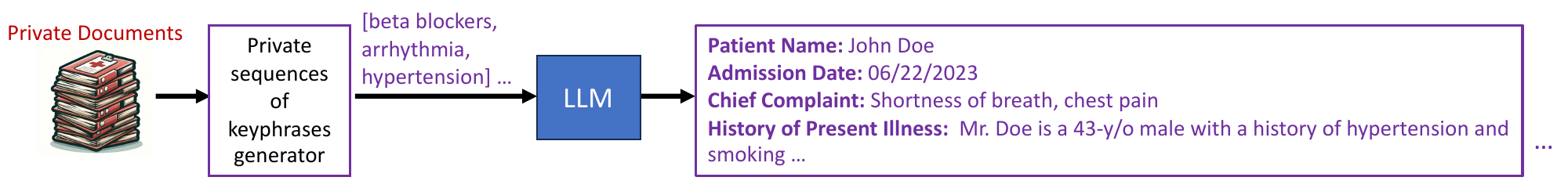}}
\caption{\textbf{\alg}~general approach. A privately generated sequence of keyphrase is used to seed an LLM prompt for generating each synthetic text document (e.g., a medical record).\label{fig:method2}}
\end{center}
\vskip -0.2in
\end{figure*}

\subsection{Related Work}\label{sec:related}


Recent years have seen ample and successful work on DP training of large deep generative models, including \cite{xie2018differentially,torkzadehmahani2019dp,chen2020gs,ramaswamy2020training,cao2021don,yu2021differentially,li2021large,zhang2023dpzero,tang2024private}, with some work specifically on private text generation (e.g., \cite{torfi2022differentially,yue2022synthetic,tang2023privacy}).
While effective, these methods generally require the ability to train (or fine-tune) the generative model, and allow it to directly access the private data. In our setting, where the data owner has only inference access to the LLM, and is not allowed to feed it the private data in the clear, these methods are not applicable. The line of work of \cite{harder2021dp,vinaroz2022hermite} is closer to ours, in that the generative model is trained only on data which is already privatized. However, the method still requires training the model, and with a specially designed objective function, which is inapplicable to LLMs. 

Private Aggregation of Teacher Ensembles (PATE) \cite{papernot2016semi,papernot2018scalable} is a prominent paradigm for DP machine learning, with recent extensions specialized to generative models \cite{jordon2018pate} and LLMs 
\cite{duan2023flocks}. It is based on private knowledge transfer from the private dataset to public data, which can then be used to train models without further privacy loss. The recent PromptPATE method \cite{duan2023flocks} realizes this paradigm in a way that only accesses an LLM with already privatized prompts, rendering it similar in spirit to ours. However, like prior PATE methods, it requires access to public data of a similar domain (though possibly differently distributed), to which private knowledge can be transferred.
In certain application contexts such data may be unavailable, for example in medical applications, where even public datasets released for research purposes are scarce and highly regulated. 


Most directly related to us is the recent concurrent work on Augmented Private Evolution (AugPE) for DP text generation~\cite{api2}, adapting a precursor work for private images~\cite{lin2023differentially}. It addresses the same problem setting as ours. Our experimental section thoroughly compares the two methods. 


\begin{figure*}[t]
\vskip 0.2in
\begin{center}
\centerline{\includegraphics[width=\textwidth]{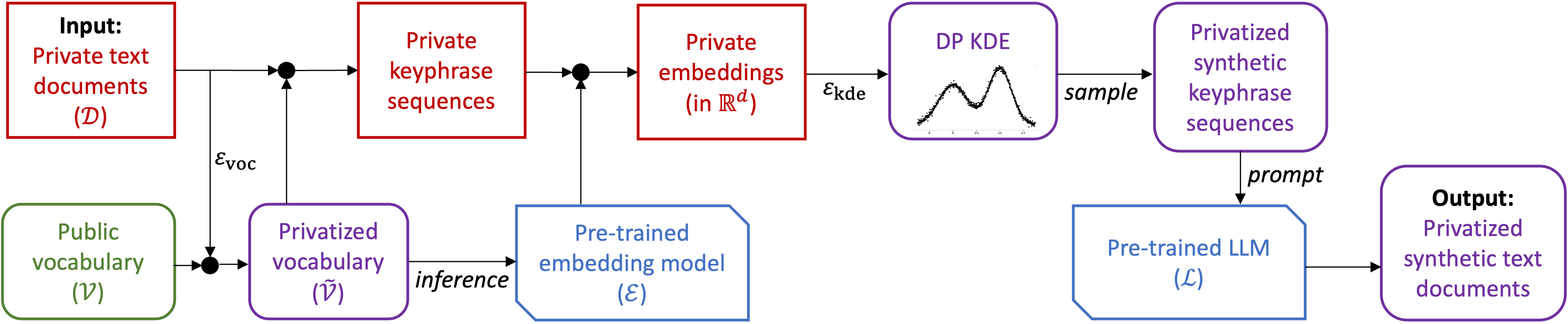}}
\caption{\textbf{\alg}~detailed method overview. Color coding: \textcolor{methodred}{red -- private data}, \textcolor{methodpurple}{purple -- differentially privatized data (safe to release)}, \textcolor{methodgreen}{green -- public data}, \textcolor{methodblue}{blue -- public pre-trained model}. The pre-trained models are only used for inference, and on already privatized data.}\label{fig:flow}
\end{center}
\vskip -0.2in
\end{figure*}
\subsection{Preliminaries}\label{sec:prelim}

\noindent\textbf{Differential Privacy.}
Let $\mathbb D$ be a ``universe'' of data records. Two datasets $\mathcal{D},\mathcal{D'}\subset\mathbb D$ are called \emph{adjacent} if one is obtained from the other by dropping a single record. 
Let $M$ be a randomized algorithm that takes a dataset $\mathcal{D}\subset\mathbb D$ as input. $M$ is said to be \emph{$(\varepsilon,\delta)$-DP} if for every adjacent $\mathcal{D},\mathcal{D'}\subset\mathbb D$, and every set $\mathcal S$ of possible inputs of $M$, it holds that
\[ \Pr[M(\mathcal{D})\in \mathcal S] \leq e^\varepsilon \Pr[M(\mathcal{D'})\in \mathcal S] + \delta . \]
Intuitively, the output of $M$ is insensitive to the inclusion of any single data record in the input dataset, and thus information about any single record should not be possible to glean from its output. 
In our case, $\mathbb D$ is the universe of all possible text documents, and $\mathcal{D},\mathcal{D'}$ are datasets of text documents. Thus, DP means that the output should be insensitive to the presence of any single text document.

The case $\delta=0$ is called ``pure'' DP, and in that case $M$ is said to be \emph{$\varepsilon$-DP}. Our method is pure DP. 

\noindent\textbf{Kernel density estimation.}
Given a set $X\subset\R^d$, the Gaussian KDE function $KDE_X:\R^d\rightarrow[0,1]$ is defined as $KDE_X(y)=\tfrac{1}{|X|}\sum_{x\in X}e^{-\norm{y-x}_2^2}$. Up to normalization, $KDE_X(y)$ is the density at $y$ of a mixture of Gaussians centered at each $x\in X$. This is a common way to model a finite dataset $X$ as a distribution over all $\R^d$. 

A DP mechanism for estimating $KDE_X(y)$ at any given $y\in\R^d$, up to a bounded error, was given in \cite{wagner2023fast}. It runs in time linear in the dataset size $|X|$ and in the dimension $d$. It can thus be used for high-dimensional embeddings obtained from pre-trained deep learning models.

\section{\alg: Method Description}\label{sec:method_main}

\begin{figure}
    \begin{center}
        \includegraphics[width=\textwidth]{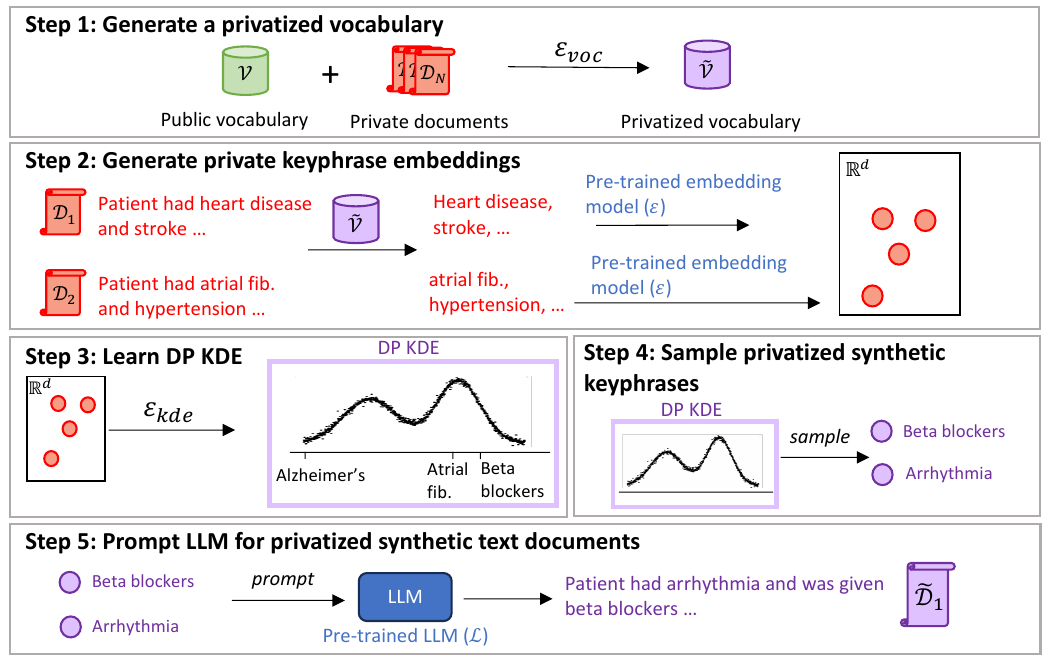}
        \captionsetup{font=footnotesize}
        \caption{\textbf{\alg} method overview. We illustrate the process of generating privatized synthetic medical records. For simplicity, each medical record is represented by a single sentence. The two example private documents shown here contain terms related to heart conditions. This results in the DP KDE having a higher concentration near the words related to the heart, and  hence sampling a term such as `beta blocker' for the synthetic key-phrase.} 
        \label{fig:detailedmethod}
    \end{center}
\vspace{-0.15 in}
\end{figure}

The core of \alg\ is prompting the LLM to generate synthetic text, by seeding the prompt with a sequence of keyphrases that are already privatized. 
We now describe how we instantiate this idea in detail.
The input to \alg\ consists of the following:
\begin{itemize}
 \item A private dataset $\mathcal{D}$ of text documents.
 \item A large public vocabulary $\mathcal{V}$. It need not be specialized to $\mathcal{D}$, but should be suitable for the data domain (for example, a glossary of medical terms for medical records, or a full English dictionary for Wikipedia articles).
 \item A public pre-trained embedding model $\mathcal{E}$, that can map terms from $\mathcal{V}$ into embeddings in $\R^d$. 
 \item Prompting access to an LLM $\mathcal L$. It can be prompted, but it is not allowed to see the private data, and therefore the prompts it receives must be already privatized.
\end{itemize}
The steps of \alg~are described next and illustrated in \Cref{fig:detailedmethod}.

\noindent\textbf{Pre-processing: Privatized vocabulary.}
We first expend some of the privacy budget on producing a limited privatized vocabulary $\widetilde{\mathcal{V}}$ from $\mathcal{V}$ (Step 1 in \Cref{fig:detailedmethod}). This serves two purposes: limiting the vocabulary size for better computational efficiency, and rendering it more relevant to the private dataset $\mathcal{D}$. We denote the privacy budget we expend here by $\varepsilon_{\mathrm{voc}}$.
We produce $\widetilde{\mathcal{V}}$ using vanilla DP histograms \cite{dwork2014algorithmic}; see appendix for details.

\noindent\textbf{Generating private keyphrase sequences.}
The main step of \alg\ is generating private sequences of keyphrases from $\widetilde{\mathcal{V}}$, that capture the private dataset $\mathcal{D}$ while maintaining differential privacy. 
We denote the privacy expended on this step by $\varepsilon_{\mathrm{kde}}$. Together with the preprocessing step, by the differential privacy composition theorem \cite{dwork2014algorithmic}, the total privacy budget of \alg\ is $\varepsilon_{\mathrm{voc}}+\varepsilon_{\mathrm{kde}}$.

We start by embedding all individual keyphrases $v\in\widetilde{\mathcal{V}}$ into vectors in $\R^d$, using the pre-trained embedding model $\mathcal{E}$ (Step 2 in \Cref{fig:detailedmethod}). The embedding dimension $d$ is typically high (e.g., $d=768$ is a common choice), and therefore subsequent operations must run in time at most linear in $d$.

Next, let $L$ be the desired number of keyphrases in an output sequence. Our current goal is to generate a collection of sequences from $\widetilde{\mathcal{V}}^L$ that would successfully represent $\mathcal{D}$ in downstream tasks. 
%
To this end, we will build a collection of suitable DP-KDE distributions over the private data (Step 3 in \Cref{fig:detailedmethod}), and generate private sequences from the associated DP-KDE scores (Step 4 in \Cref{fig:detailedmethod}). 
This is the most technically involved step in our method, and we explain it in stages: first, how to sample one keyphrase, and then, how to generate a sequence of keyphrases. 

\noindent{\emph{(i) Sampling one keyphrase.}}
To sample a single keyphrase, we use the high-dimensional DP-KDE mechanism from \cite{wagner2023fast}. It builds the KDE data structure on a given set of vectors $X\subset\R^d$ in time $O(d|X|)$, and then allows querying the KDE score of any point in $\R^d$ in time $O(d)$. 
%
However, in order to generate a sequence, we need to draw sample of new points from the DP-KDE distribution, rather than query the scores at given points. 
Unfortunately, \cite{wagner2023fast} did not present an efficient sampling algorithm for their mechanism. All known sampling methods take time exponential in $d$ --- e.g., by enumerating over a $d$-dimensional grid --- which is infeasible for high-dimensional embeddings. 

To resolve this issue, 
we exploit the fact that we are only interested in samples from our privatized vocabulary $\widetilde{\mathcal{V}}$, rather than from all of $\R^d$. 
Thus, $\widetilde{\mathcal{V}}$ can serve as a feasible replacement for grid enumeration. 
We query the DP-KDE for the density $\tilde p_v$ of every keyphrase $v\in\widetilde{\mathcal{V}}$, which entails a feasible running time of $O(d|\widetilde{\mathcal{V}}|)$. Then, we may draw samples from the resulting multinomial distribution over $\widetilde{\mathcal{V}}$, wherein each $v\in \widetilde{\mathcal{V}}$ is sampled with probability $\tilde p_v/\sum_{v'\in\widetilde{\mathcal{V}}}\tilde p_{v'}$.


\noindent{\emph{(ii) Generating a sequence of keyphrases.}}
There are two natural ways to generate a sequence of $L$ keyphrases with the above procedure for sampling one keyphrase. We explore both of them.

The first method is to concatenate $L$ single keyphrase samples drawn independently. 
One advantage of this method is its fast running time. One possible downside is that it may fail to capture correlations and dependencies between phrases that tend to occur together in texts, which may be important in downstream learning. Yet, as our experiments will show, this method can be highly effective.

The second method is to generate the sequence iteratively, where each new keyphrase is added to the sequence while taking into account the prefix of keyphrases that were already generated. In more detail, iterative sequence generating proceeds as follows:

\begin{enumerate}
 \item[1.] Initialize an empty sequence of keyphrases $P$.
 \item[2.] For $i=1,\ldots,L:$
 \begin{enumerate}
 \item[2.1] For every single keyphrase in $w\in\widetilde{\mathcal{V}}$, let $Pw$ be the concatenation of $P$ and $w$. Thus $Pw$ is a keyphrase sequence of length $i$. Enumerate over all possible concatenations $\{Pw:w\in\R^d\}$, and compute the DP-KDE score of each.
 \item[2.2] Choose among them a high scoring sequence $Pw^*_i$. 
 \item[2.3] $P\leftarrow Pw^*_i$.
 \end{enumerate}
 \item[3.] Return the output sequence $P$.
\end{enumerate}
In each iteration, this scheme makes only $|\widetilde{\mathcal{V}}|$ queries to the KDE data structure, for a total of $L|\widetilde{\mathcal{V}}|$, rendering its running time feasible. At the same time, jointly scoring keyphrase prefixes rather than just individual keyphrases promotes preserving dependencies between co-occurring keyphrases. 

One downside of this method is that it queries the KDE scores for keyphrase prefixes of multiple lengths --- that is, of vectors with varying dimensions --- and thus requires multiple DP KDE data structures. The privacy budget needs to be partitioned between these data structure, leading to decreased accuracy in each. Na\"ively, the method requires $L$ DP KDE data structures, one per each prefix length. In \Cref{sec:seqgen_appendix} we show how this can be improved to $O(\log L)$.

\noindent\textbf{Prompting the LLM and post-processing.}
Once the privatized keyphrase sequences have been generated, we query the LLM once per sequence to generate a corresponding synthethic text document (Step 5 in \Cref{fig:detailedmethod}). The prompt can be tweaked in various ways to improve the output quality for specific applications, but since our focus here is on developing a general method, we restrict our scope to generic prompts that only mention the desired document type (e.g., ``write a medical record that contains the following terms: $\langle$\textit{sequence of keyphrases}$\rangle$''). Since the keyphrase sequences are $(\varepsilon_{\mathrm{voc}}+\varepsilon_{\mathrm{kde}})$-DP, then by DP post-processing \cite{dwork2014algorithmic}, the LLM output is $(\varepsilon_{\mathrm{voc}}+\varepsilon_{\mathrm{kde}})$-DP as well.

The privatized documents generated by the LLM are intended for use in downstream ML by a client with its own test set. 
One issue is that the LLM output might look very different than the test set. For example, if we prompt the LLM to generate a ``medical record'', it might generate one in a completely different format than those in a client hospital's test set.
To resolve this, the final step in \alg\ is off-the-shelf domain adaptation on the client side, between the synthetic data and the test data. 

Another possibility we consider is \emph{few-shot prompting}, where the client provides a small number of examples from their test set (say, a few example medical records with the desired format), which can augment the prompt, so that the LLM is prompted to produce outputs adhering to this format. This fits application scenarios where the client can communicate with the curator during the generation of the private data. We examine this in \Cref{app:fewshot}.

\section{Experiments}\label{sec:experiments}

\begin{table*}[t]
\centering
{\renewcommand{\arraystretch}{2}
\caption{Datasets, public vocabularies and pre-trained embedding models used in our experiments.} \label{tbl:datasets}
\begin{centering}
\small
\begin{tabular}{llcll}
\toprule
 \textbf{Dataset} & \textbf{Data type} & \textbf{\# Classes} & \textbf{Public vocabulary} ($\mathcal V$) & \textbf{Pre-trained emb.~model} ($\mathcal E$) \\
\midrule
MIMIC & medical records & 2 & UMLS medical glossary & BioBERT ($d=768$) \\
DBPedia-14 & Wikipedia summaries & 14 & GloVe English dictionary & SentenceBERT ($d=768$) \\
\bottomrule
\end{tabular}
\end{centering}}
\end{table*}

We test \alg\ on two text classification tasks on public datasets from different domains (cf.~\Cref{tbl:datasets}):

\noindent\textbf{(i) MIMIC} \cite{goldberger2000physiobank}: MIMIC is a dataset of medical records. We consider the binary classification task of patients diagnosed with cardiac conditions versus those who weren't, from the description of the hospital course provided in their discharge summary. The groundtruth labels are determined by the ICD diagnostic codes\footnote{\url{https://www.who.int/standards/classifications/classification-of-diseases}} 
for which the patients were billed.
We chose 10K medical records of each class at random, and use 8K of each class as the private dataset $\mathcal D$, and the remaining records as the test set. 
As the public vocabulary $\mathcal V$ for this task, we use 400K medical concepts from the Unified Medical Language System (UMLS) Glossary.\footnote{\url{https://www.nlm.nih.gov/research/umls/index.html}}
For the pre-trained embedding model $\mathcal E$ we use BioBERT \cite{lee2020biobert}. See \Cref{app:mimic} for additional details.

\noindent\textbf{(ii) DBPedia-14} \cite{NIPS2015_250cf8b5}: The dataset consists of short summaries of Wikipedia articles from 14 topic categories, and the task is topic classification. We use its given split to training and test sets, treating the training set as the private dataset $\mathcal D$. 
For the public vocabulary $\mathcal V$, we use GloVe 6B dictionary \cite{pennington2014glove},\footnote{\url{https://nlp.stanford.edu/projects/glove/}} which consists of 400K generic English language words. For the pre-trained embedding model $\mathcal E$ we use ``all-mpnet-base-v2'' from SentenceBERT \cite{reimers2019sentence}.

\noindent\textit{Inclusion in LLM pre-training data.} 
LLMs typically do not disclose the data they have been pre-trained on, which creates methodological difficulties in using them for research, particularly in the context of privacy, so we comment on this matter directly. In our case, the MIMIC legally binding dataset usage terms prohibit using the data in training LLMs,\footnote{\url{https://physionet.org/news/post/gpt-responsible-use}} 
and therefore we may assume with high certainty that the main LLM we use in experiments (Claude v2) has not seen the private data.\footnote{We have also verified this with Anthropic AI by contacting their Support Team.} 
On the other hand, it is rather likely to have seen DBPedia-14, and nearly certainly has seen the original Wikipedia articles on which the dataset is based. 
As the case would be similar with any standard benchmark dataset, we opt to include one for completeness while acknowledging this limitation. 

\noindent\textbf{Experimental setup.}
For both datasets, our downstream task is text classification. To apply \alg\ to this task, in the preprocessing step we create a mutual privatized dictionary $\widetilde{\mathcal{V}}$ for all classes, and then perform the sequence generation step for each class separately with its own DP KDE. Using each class DP KDE, we generate keyphrase sequences for seeding prompts, and the texts the LLM generates for them are used as synthetic training records labeled with the same class. 
The LLM prompts in the final step contain only the overall document type (``medical record'' for MIMIC and ``summary of a Wikipedia-style article'' for DBPedia-14) and the privatized sequence of phrases, without explicit class information (e.g., we do not prompt the LLM to generate medical records for patients with cardiac conditions, or Wikipedia articles about artists). 

\noindent\textbf{Method components.}
In the preprocessing step we use $S=10$ phrases of each dataset record to produce a private vocabulary $\widetilde{\mathcal{V}}$ of size $1000$ from the public vocabulary $\mathcal V$. In the sequence generation step, we generate $1500$ sequences of length $L=10$ for each MIMIC class, and $1000$ sequences of length $L=10$ for each DBPedia-14 class. The main LLM we use is Anthropic AI's Claude v2 \cite{ClaudeLink}.
For the domain adaptation step, we use Deep CORAL \cite{sun2016deep}, see \Cref{sec:deep_coral_details} for details.

\noindent\textbf{Privacy parameters.}
To choose $\epsilon$ we followed the guidelines offered by \cite[Section 5]{ponomareva2023dp} for private ML. They review current DP-trained ML models and LLMs, as well as real-world deployments of DP, and report them using $\varepsilon$ between 5 to 15. They advocate for $\epsilon\leq10$ for real-world deployments.

We chose two values, one representing tighter privacy and one moderate privacy, for each of the two DP budget components in \alg: $\varepsilon_{\mathrm{voc}}\in\{1,5\}$ and $\varepsilon_{\mathrm{kde}}\in\{5,10\}$. 
We evaluate \alg\ with total privacy budgets equal to the resulting four combinations, $\varepsilon=\varepsilon_{\mathrm{voc}}+\varepsilon_{\mathrm{kde}}\in\{6,10,11,15\}$. 
They represent tighter ($6$), moderate ($10,11$) and looser ($15$) privacy settings. 
We limit our experiments to four $\epsilon$ values due to the high impact associated with large-scale experiments with a high-end LLM. 


\noindent\textbf{Comparison to AugPE.} 
AugPE \cite{api2} is another recent method for generating private synthetic texts by LLM prompts (see \Cref{sec:related}). It start by prompting the LLM for random texts, and then sequentially prompts it to ``evolve'' them toward the private texts while maintaining DP.  
As implemented and reported in \cite{api2}, the initial prompt includes the target class topic in the clear (e.g., ``write an article about an athlete''), and also includes random words from a list that the LLM had been prompted to generate for that class topic (e.g., generic words related to athletes). 
This already ``gives away'' much of the intended content to the LLM in the clear.
However, in our setting, the class topics are considered private and are unknown to the algorithm, and therefore cannot be included in prompts. Furthermore, including the class topic in the prompt may lead to methodological artifacts, as it is hard to discern how much of the downstream accuracy owes to the class topic, which was given to the algorithm free of privacy, versus meaningfully making use of the private dataset. 

Therefore, to directly compare AugPE to \alg\ in our setting, we do not include the class topic in prompts for neither method. Since also removing the random words from AugPE's prompts leads to substantially degraded performance (see results below), we replace them  with random words from the privatized dictionary $\widetilde{\mathcal{V}}$ that we create in our pre-processing step (with $\varepsilon_{\mathrm{voc}}=1$). 
This is a variant of AugPE compatible with the privacy constraints of our setting, and we denote it by AugPE+$\widetilde{\mathcal{V}}$.

\subsection{Results}
\begin{table*}[t]
{

\small
\begin{center}
\caption{Classification accuracy of \alg\ (ours), AugPE+$\widetilde{\mathcal{V}}$, and the original texts.}\label{tbl:mainres}
\begin{tabular}{ c | c c c | c c c }
\toprule
& \multicolumn{3}{c|}{MIMIC} & \multicolumn{3}{c}{DBPedia-14} \\
\cmidrule{2-7}
  \hfill Method: & \alg\ & AugPE+$\widetilde{\mathcal{V}}$  & AugPE+$\widetilde{\mathcal{V}}$ & \alg\ & AugPE+$\widetilde{\mathcal{V}}$  & AugPE+$\widetilde{\mathcal{V}}$ \\
 \hfill\textit{\footnotesize{(\#texts, \#prompts) / class:}} & \textit{\footnotesize{(1K, 1K)}} & \textit{\footnotesize{(100, 1K)}} &\textit{\footnotesize{(1K, 10K)}} &\textit{\footnotesize{(1K, 1K)}} &\textit{\footnotesize{(100, 1K)}} &\textit{\footnotesize{(1K, 10K)}} \\
\midrule
\hspace{-5pt}$\varepsilon=6$\scriptsize{$\;\;(=1+5)\hfill$} & 71.6\% &50.0\% &63.8\% & 77.9\% &45.7\% &79.3\%  \\
\hspace{-5pt}$\varepsilon=10$\scriptsize{$\;\;(=5+5)\hfill$} & 71.7\% &50.0\% &60.8\% & 80.4\% &46.7\% &76.3\% \\
\hspace{-5pt}$\varepsilon=11$\scriptsize{$\;\;(=1+10)\hfill$} & 72.2\% &50.3\% &66.3\% & 82.4\% &38.7\% &79.4\% \\
\hspace{-5pt}$\varepsilon=15$\scriptsize{$\;\;(=5+10)\hfill$} & 72.2\% &50.0\% &72.3\% & 85.1\% &34.8\% &80.5\% \\
\midrule
Original training set 
& \multicolumn{3}{c|}{76.0\%} &  \multicolumn{3}{c}{97.0\%} \\
\bottomrule
\end{tabular}
\end{center}}
\end{table*}

\noindent\textbf{Downstream classification accuracy.}
\Cref{tbl:mainres} lists the accuracy obtained  by training a classifier over the synthetic corpora produced by \alg,
compared to AugPE+$\widetilde{\mathcal{V}}$, and to training on a similar number of (randomly sampled) texts from the original training set. 
For each corpus we train the classifier by finetuning the pre-trained embedding model (BioBERT for MIMIC and SentenceBERT for DBPedia-14), namely by training a 3-layer fully connected neural network composed over it. 

Note that while \alg\ prompts the LLM once per synthetic text record, AugPE makes 10 iterative prompts per text, due to its ``evolution'' procedure (see also \Cref{sec:augpe_details}). To compare the methods, for \alg\ we use 1K docs and 1K prompts per class, and for AugPE+$\widetilde{\mathcal{V}}$, we report in \Cref{tbl:mainres} results once with the same number of prompts (and less texts), and once with the same number of texts (and more prompts). In \Cref{fig:result_augpe_epsall} we further compare the accuracy of both methods under varying ``prompts budgets'' for AugPE+$\widetilde{\mathcal{V}}$, and also report results for AugPE (without $\widetilde{\mathcal{V}}$).

The results show the corpora generated by \alg\ preserve much of the predictive accuracy of the original texts, while satisfying DP. Compared to either variant of AugPE, \alg\ achieves better accuracy with fewer prompts to the LLM, about $\times10$ less prompts for comparable accuracy. 
Furthermore, \alg\ is more consistent in improving as the privacy budget $\varepsilon$ is increased.

\noindent\textbf{Additional experiments.} More experimental results and ablations are reported in \Cref{sec:experiments_appendix}.

\noindent\textbf{\alg\ Examples.}
\Cref{tbl:example1} contains examples of \alg\ for three classes from DBPedia-14 (examples for all classes are given in  \Cref{tbl:example1full,tbl:example2} in the appendix). For each class we show two seed phrase sequences generated by \alg, and the LLM output when prompted to ``generate a summary of a fictitious wikipedia-style article'' that contains them. The intended class is not specified in the prompt, and for each class we show one ``good'' example, where we (subjectively) judge the topic of the output to be of the intended class, and one ``bad'' example where it is not. 

Examples for MIMIC synthetic records are not included here due to an abundance of caution in complying with the dataset's legal terms of use, which require a certification for viewing the data.

\begin{figure}[t]
    \begin{center}
        \includegraphics[width=\textwidth]{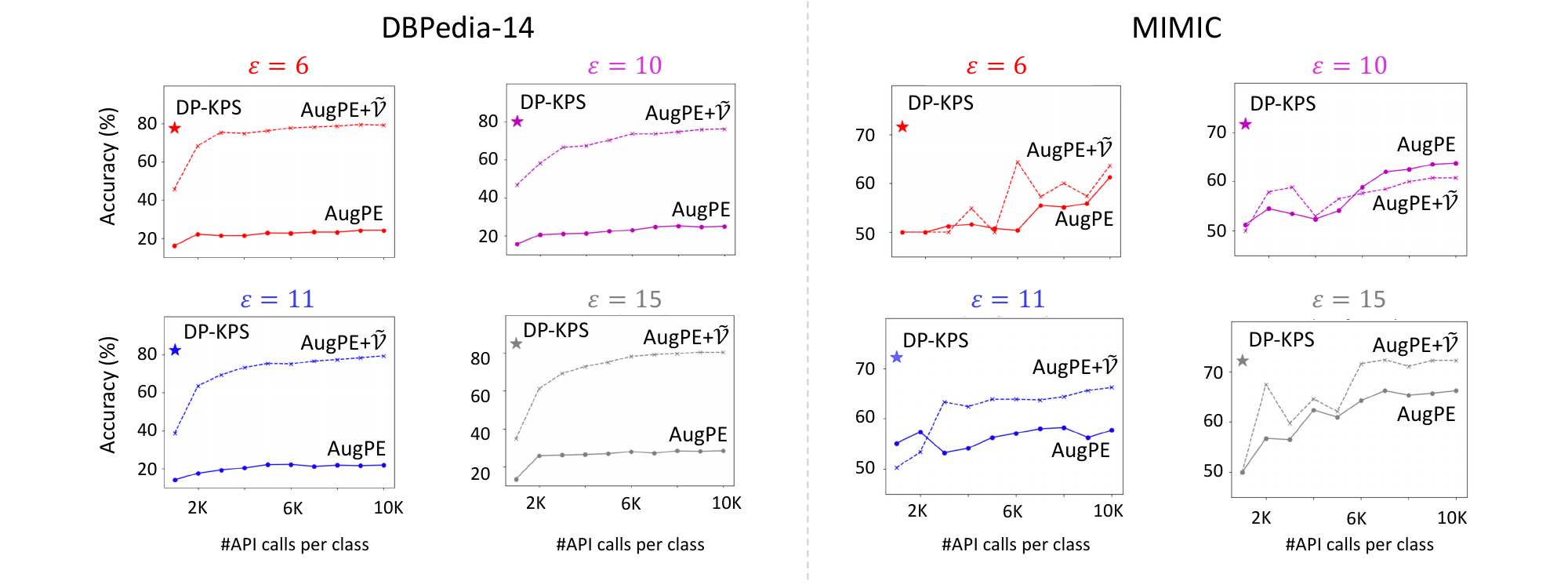}
        \caption{Classification performance of data generated by DP-KPS (ours - star marker), AugPE+$\widetilde{\mathcal{V}}$ (dotted line) and AugPE (solid line) with varying privacy and prompt budget.}
        \label{fig:result_augpe_epsall}
    \end{center}
\end{figure}

\begin{table*}
\caption{\textcolor{goodgreen}{Good} and \textcolor{badred}{bad} examples for classes 1--3 in DBPedia-14, with $\varepsilon_{\mathrm{voc}}=1$ and $\varepsilon_{\mathrm{kde}}=10$.} \label{tbl:example1}
{\renewcommand{\arraystretch}{2}
\begin{centering}
\setlength\extrarowheight{-3pt}
\small
\begin{tabular}{p{0.08\linewidth}p{0.87\linewidth}}
\toprule
 Class & \alg\ results: Keyphrase sequence and corresponding LLM output \\
\midrule
Company &

['plants', 'company', 'distribution', 'operating', 'social', 'ships', 'andrew', 'based', 'michigan', 'york']: \newline 
\textcolor{goodgreen}{Plants Distribution Company is a Michigan-based company operating ships to distribute plants and other goods. Founded by Andrew York in New York, the small social enterprise distributes products across the region.} \\
&
['director', 'institution', 'royal', 'schools', 'municipality', 'club', 'journal', 'johnson', 'office', 'oil']: \newline 
\textcolor{badred}{John Johnson was appointed director of the Royal Schools institution in the municipality in 2020. He previously worked in the office of the Oil Club journal.} \\
\midrule

Educational institution
&
['1908', 'school', 'school', 'college', 'institute', 'school', 'germany', 'university', 'college', 'manufacturing']: \newline 
\textcolor{goodgreen}{The College Institute was founded in Germany in 1908 as a manufacturing school. Originally a vocational college, it later became a university specializing in engineering and technology. The school focuses on hands-on training in manufacturing techniques.} \\
&
['scott', 'school', 'brazilian', 'guard', 'sciences', '1996', 'high', 'lee', 'mystery', 'england']: \newline 
\textcolor{badred}{Scott Lee was born in England in 1996. He attended a Brazilian jiu-jitsu school in high school where he learned grappling and the martial art of Brazilian jiu-jitsu. The mystery sciences behind the techniques fascinated Scott during his time there.}\\
\midrule

Artist &

['species', 'february', '1944', 'european', '1929', 'music', 'musician', 'duo', 'frank', 'light']: \newline 
\textcolor{goodgreen}{The musical duo Frank \& Light was formed in February 1944 in Europe by Frank Smith and Light Williams. The duo, known for their unique blend of jazz and classical music, released their debut album "Species of Music" in 1929 to critical acclaim. Though popular in the late 1940s, they disbanded as a musical act in the early 1950s.} \\

& ['england', 'june', '1972', '2003', 'english', 'animated', 'nonfiction', 'publisher', 'utah', 'november']: \newline 
\textcolor{badred}{In November 2003, an English animated nonfiction publisher based in Utah released a book about events that took place in England in June 1972. The book was well-received upon its November 2003 publication. }\\

\bottomrule
\end{tabular}
\end{centering}}
\end{table*}

\section{Conclusion}
The question of how best to generate private synthetic data is acutely important -- particularly when hospitals want to build predictive models but cannot share medical records.  A method for generating private keyphrases is introduced, carefully combining high-dimensional embeddings with KDE and DP to generate sequences of phrases that capture the private text corpus. Experiments suggest that our presented method generates medical records that can be used to train an ML model with higher accuracy, stronger privacy and lower LLM budget than baselines. 

Exciting challenges lie ahead.  Medical records are richer than just text.  They contain numbers (from blood reports), time series (of heart rate, respiration rate, etc) and images (x-rays, MRI).  The question of how to privately and jointly generate these diverse components is an unsolved grand challenge.

\newpage

\bibliography{dp}
\bibliographystyle{amsalpha}

\newpage
\appendix
\section{Additional Method Details}


\subsection{DP-KPS Vocabulary Privatization}
The privatized vocabulary $\widetilde{\mathcal V}$ is generated as follows. 
\begin{enumerate}
 \item From each text document in $\mathcal{D}$, extract that first $S$ terms that appear in $\mathcal{V}$. 
 \item Build a privatized histogram $\widetilde{\mathcal{H}}$ over $\mathcal{V}$ from the $S\cdot|\mathcal{D}|$ extracted terms, by adding an i.i.d.~sample from $\mathrm{Laplace}(S/\varepsilon_{\mathrm{voc}})$ to each count.
 \item The private vocabulary $\widetilde{\mathcal{V}}$ consists of the $N$ terms from $\mathcal{V}$ with the highest counts in $\widetilde{\mathcal{H}}$.
\end{enumerate}
It is easily seen that the $\ell_1$-sensitivity of the histogram is $S$, and therefore, the standard DP Laplace mechanism (cf.~\cite{dwork2014algorithmic}) ensures that $\widetilde{\mathcal{V}}$ is $\varepsilon_{\mathrm{voc}}$-differentially private.

\subsection{Details on Sequence Generation Methods}\label{sec:seqgen_appendix}
As discussed in \Cref{sec:method_main}, the main step of DP-KPS generates private sequences of keyphrases from $\widetilde{\mathcal{V}}$. We now detail the methods to do this with their computational parameters. 

Let $L$ be the desired number of keyphrases.  Our sequence generation methods are based on constructing a collection of suitable DP-KDE distributions over the private data, and generating private sequences from the associated DP-KDE scores. We use the high-dimensional DP-KDE mechanism from \cite{wagner2023fast}. 
For a private dataset of vectors $V\subset\R^d$ and desired accuracy $\alpha$, it builds the DP-KDE data structure in time $O(d|V|/\alpha^2)$, and then allows querying the KDE score of any point in $\R^d$ in time $O(d/\alpha^2)$ up to additive error $\alpha$. 
To ease notation, in what follows we treat $\alpha$ as a small constant, and suppress it in asymptotic running time bounds.

\noindent\textbf{Independent keyphrase generation.}
A simple way to privately generate a length-$L$ sequence is to build a single DP-KDE distribution over $\widetilde{\mathcal{V}}$ using all single keyphrases from $\mathcal{D}$, and then draw $L$ i.i.d.~samples from this distribution to produce a length-$L$ sequence. 
The preprocsessing time of this method is $O(d|\mathcal{D}|M+d|\widetilde{\mathcal{V}}|)$, where $M$ is the maximum number of keyphrases in a document, to first construct the DP-KDE data structure, and then query the density of each embedding in $\widetilde{\mathcal{V}}$. Then, a length-$L$ sequence can be generated in time $O(L)$ by drawing $L$ i.i.d.~samples from the induced multinomial distribution over $\widetilde{\mathcal{V}}$.

While this method is fast, independent keyphrase sampling may fail to capture correlations and dependencies between often co-occurring keyphrases, which may be important in downstream learning tasks. Therefore, we next consider sequence generation methods that preserve such correlations.

\noindent\textbf{Iterative sequence generation.}
Ideally, we would have liked to draw a sample from a DP-KDE distribution over all of $\widetilde{\mathcal{V}}^L$. However, computing the induced multinomial distribution over $\widetilde{\mathcal{V}}^L$ would require $|\widetilde{\mathcal{V}}|^L$ queries to the DP-KDE data structure, which is prohibitive even for moderate values of $|\widetilde{\mathcal{V}}|$ and $L$. 
Instead, drawing on intuition from conditional sampling and auto-regressive models, we generate a sequence by the iterative process described in \Cref{sec:method_main}.

This scheme performs only $L|\widetilde{\mathcal{V}}|$ DP-KDE queries overall ($|\widetilde{\mathcal{V}}|$ queries in each of the $L$ iterations), rendering its running time feasible. 
Yet, a hurdle toward implementing it is that the DP-KDE mechanism operates on vectors of a fixed dimension, while here we need to query vectors of varying dimensions (the length-$i$ prefixes $Pw$ in iteration $i$ have dimension $di$). 
We describe how to handle this by using an ensemble of DP-KDE data structures. 

\noindent\emph{Linear size DP-KDE ensemble.}
One way is to simply create $L$ DP-KDE mechanism $K_1,\ldots,K_L$, where $K_i$ operates on dimension $di$. 
In the above sequence generation scheme, iteration $i$ would make $|\widetilde{\mathcal{V}}|$ DP-KDE queries to $K_i$. 

We calculate the computational parameters of this method. The privacy budget $\varepsilon_{\mathrm{kde}}$ needs to be allocated among the $K_i$'s, so each is constructed with privacy parameter $\varepsilon=\varepsilon_{\mathrm{kde}}/L$. Since the error of the DP-KDE data structure from \cite{wagner2023fast} degrades with $\varepsilon$ like $\sqrt{1/\varepsilon}$, the error of each $K_i$ degrades by $\sqrt{L}$ (compared to using a single DP-KDE as Method I does). 
The overall running time is $O(dL^2|\mathcal D|M)$ for building tge DP-KDEs (each $K_i$ takes time $O(di|\mathcal D|M)$ to build), and $O(dL^2|\widetilde{\mathcal{V}}|)$ to generate every length-$L$ sequence. This is considerably more expensive than independent sequence generation, but has the potential advantage of preserving keyphrase correlations.

\noindent\emph{Logarithmic size DP-KDE ensemble.}
To improve the error and computational cost of the linear DP-KDE ensemble from \Cref{sec:method_main}, we also propose a method more frugal in the number of DP-KDEs. 
Suppose we construct just a single DP-KDE data structure $K$, over the full length-$L$ sequence dimension, $dL$. In iteration $i<L$, to retrieve the DP-KDE score of a $di$-dimensional prefix $x\in\R^{di}$, we pad $x$ with zero blocks into a $dL$-dimensional vector $\bar x\in\R^{dL}$, and query $K$ on $\bar x$. If the vectors over which $K$ is constructed are normalized---which is the case for many widely used pre-trained embedding models---then we prove that zero padding returns an accurate DP-KDE estimate even for prefixes. This is formalized in \Cref{thm:ensemblekde} below.

Ostensibly, it now suffices to build just one DP-KDE. However, a small hurdle is that different prefix lengths also require different kernel bandwidth settings when computing their KDE. This is slightly technical, and we give the details below, but the upshot is that if too many blocks in $\bar x$ are the result of zero padding, then its KDE score would not be informative even if we were to compute it exactly.

To remedy this, we build logarithmically many DP-KDEs, $K_1,\ldots,K_\ell$, with $\ell=\lceil \log L \rceil$. 
Each $K_j$ is built for vectors of dimension $d\cdot2^j$. 
In the iterative sequence generation scheme, iteration $i$ makes its queries to $K_{\lceil \log i\rceil}$, by zero-padding them from dimension $di$ to dimension $d\cdot2^{\lceil \log i\rceil}$, which is between $di$ to $2di$. This ensures that at most half the query blocks are the result of zero padding, thus ensuring thar the DP-KDE score is informative.

We calculate the computational parameters of this method. 
The privacy budget $\varepsilon_{\mathrm{kde}}$ is allocated among the $K_j$'s, so each is constructed with privacy parameter $\varepsilon=\varepsilon_{\mathrm{kde}}/\ell$ (recall that $\ell=O(\log L)$), and thus the DP-KDE error of each $K_j$ in the logarithmic ensemble degrades by $\sqrt{\log L}$ (compared to $\sqrt L$ in \Cref{sec:method_main}). The time to build each $K_j$ is $O(d\cdot 2^j|\mathcal D|M)$, hence the total building time is $O(dL|\mathcal D|M)$ (compared to $O(dL^2|\mathcal D|M)$ for the linear size ensemble). The time to generate a sequence is $O(dL^2|\mathcal D|M)$, similarly to \Cref{sec:method_main}. 
Thus, the logarithmic DP-KDE ensemble has better accuracy and building time compared to the linear ensemble, with the same sequence generation time.

To state and prove our result for logarithmic ensemble formally, we introduce some notation. 
We consider a collection of embedding vectors in $\R^d$, and assume they all have the same squared norm, $u>0$. 
Furthermore we are interested in sequences of length $L$ of such embedding, which are vectors in $\R^{dL}$ of uniform length $uL$. For such a vector $x\in\R^{dL}$, which is say the concatenation of $L$ vectors $x_1,\ldots,x_L\in\R^d$, we will denote its length-$\ell$ prefix, for every $\ell=1,\ldots,L$, by $x^{[:\ell]}$, and its remaining suffix by $x^{[\ell:]}$. 
Thus, $x^{[:\ell]}$ is the vector in $\R^{d\ell}$ given by the concatenation of $x_1,\ldots,x_\ell\in\R^d$, and $x^{[\ell:]}$ is the vector in $\R^{d(L-\ell)}$ given by the concatenation of $x_
{\ell+1},\ldots,x_L$. Note that $\norm{x^{[:\ell]}}_2^2=u\cdot\ell$, and $\norm{x^{[\ell:]}}_2^2=u\cdot(L-\ell)$

To prove our result, we define an \emph{asymmetric} Gaussian kernel, $k^{[:\ell]}:\R^{dL}\times\R^{d\ell}\rightarrow[0,1]$, by
\[ k^{[:\ell]}(x,y) = \exp(- 
 \norm{x^{[:\ell]}-y}_2^2). 
\]
Note that this kernel measures similarity between two spaces of different dimensionality: the first input $x$ is from $\R^{dL}$, and the second input $y$ is from $\R^{d\ell}$. Of course, its value coincides with the usual (symmetric) Gaussian kernel over $\R^d$ by truncating $x$ to dimension $d\ell$, but this asymmetric notation will be useful for us below in handling multiple prefix lengths simultaneously.

Let $X\subset\R^{dL}$ by a given dataset. For every prefix length $\ell\in\{1,\ldots,L\}$, the induced KDE function on the length-$\ell$ prefixes of $X$ is given by
\[
 KDE_X^{[:\ell]}(y) := \frac{1}{|X|}\sum_{x\in X}k^{[:\ell]}(x,y)
\]
for every $y\in\R^{d\ell}$. In order to sequentially sample a length-$L$ sequence, we need to evaluate these KDE functions sequentially for each value of $\ell$. 

We start by recalling the formal DP KDE result from \cite{wagner2023fast}, adapted to our notation.
\begin{lemma}[Theorem 1.1 from \cite{wagner2023fast}]\label{thm:dpkde}
 Let $\varepsilon>0$ and $\alpha\in(0,1)$ be such that $|X|\geq O(1/(\varepsilon\alpha^2))$. Then, one can construct in time $O(|X|dL/\alpha^2)$ an $\varepsilon$-DP data structure for $KDE^{[:L]}_X$, such that for every $y\in\R^{dL}$, the value of $KDE^{[:L]}_X(y)$ can be reported in time $O(d/\alpha^2)$ with probability $0.99$ up to additive error at most $\alpha$. 
\end{lemma}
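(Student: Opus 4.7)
The plan is to combine random Fourier features (RFF) with the Laplace mechanism, in the standard recipe for private kernel summaries. First I would invoke Bochner's theorem for the Gaussian kernel to write $k(x,y) := e^{-\|x-y\|_2^2} = \E_{w}[\cos(w^\top(x-y))]$, where $w$ is drawn from an isotropic Gaussian of appropriate variance on $\R^{dL}$. Sampling $m = O(1/\alpha^2)$ frequencies $w_1,\dots,w_m$ and defining an embedding $\phi: \R^{dL} \to \R^{2m}$ using the standard cosine/sine features scaled by $1/\sqrt{m}$ yields an explicit map with $\|\phi(x)\|_\infty \leq 1/\sqrt{m}$ and $\phi(x)^\top \phi(y) = k(x,y) \pm \alpha/2$ with high probability, by Hoeffding's inequality applied to the $m$ bounded i.i.d.\ summands.

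Next, I would summarize the private dataset by its mean embedding $\bar{\phi}_X := \tfrac{1}{|X|}\sum_{x\in X}\phi(x) \in \R^{2m}$, and bound its $\ell_1$-sensitivity: dropping or inserting one data point shifts each of the $2m$ coordinates of $\bar{\phi}_X$ by at most $2/(|X|\sqrt{m})$, giving $\Delta_1 = O(\sqrt{m}/|X|)$. I would then release $\widetilde{\phi}_X := \bar{\phi}_X + \eta$, where $\eta \in \R^{2m}$ is i.i.d.~$\mathrm{Laplace}(\Delta_1/\varepsilon)$ noise; this is $\varepsilon$-DP by the Laplace mechanism, and the ``data structure'' itself is simply the frequencies $w_1,\dots,w_m$ together with the vector $\widetilde{\phi}_X$.

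To answer a query $y\in\R^{dL}$, I would return $\widehat{\mathrm{KDE}}(y) := \phi(y)^\top \widetilde{\phi}_X$. The approximation error decomposes into (i) the RFF bias $\phi(y)^\top \bar{\phi}_X - \mathrm{KDE}_X(y)$, controlled by $\alpha/2$ via the pointwise RFF concentration above, and (ii) the noise term $\phi(y)^\top \eta$, which is a weighted sum of $2m$ independent Laplaces with weight vector $\phi(y)$ of $\ell_2$-norm at most $1$. The latter has standard deviation $O(\sqrt{m}\cdot \Delta_1/\varepsilon) = O(\sqrt{m}/(\varepsilon|X|))$; substituting $m = O(1/\alpha^2)$ and the hypothesis $|X| = \Omega(1/(\varepsilon\alpha^2))$ makes this $O(\alpha)$, and a constant-probability tail bound on a sub-exponential random variable completes the claim. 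Preprocessing costs $O(|X|\cdot dL\cdot m) = O(|X|dL/\alpha^2)$ to form all embeddings and the noisy mean, and a query costs $O(dL/\alpha^2)$ for computing $\phi(y)$ and the final inner product.

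The main obstacle I expect is getting the tight $|X| = \Omega(1/(\varepsilon\alpha^2))$ threshold rather than a weaker bound: a naive worst-case argument that bounds the noise coordinate-wise by $\|\phi(y)\|_1\cdot\Delta_1/\varepsilon$ loses an extra $\sqrt{m}$ factor. Overcoming this requires exploiting that $\phi(y)$ has $\ell_2$-norm at most $1$ (not merely $\ell_\infty$-norm at most $1/\sqrt{m}$), together with sub-exponential concentration for Laplace sums --- this is the key quantitative step carried out in \cite{wagner2023fast}, and at this point the result would follow by combining its conclusion with the RFF concentration above.
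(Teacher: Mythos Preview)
Your proposal is correct and is precisely the construction the paper itself recaps (from \cite{wagner2023fast}) in the proof of \Cref{thm:ensemblekde}: draw $O(1/\alpha^2)$ random Fourier features for the Gaussian kernel, privatize the per-feature averages over $X$ with Laplace noise calibrated to their $\ell_1$-sensitivity, and answer a query by taking the inner product with $\phi(y)$, with the noise term controlled via $\|\phi(y)\|_2\leq 1$ and sub-exponential concentration. The only cosmetic differences are that the paper uses the random-phase RFF variant $f_i(z)=\sqrt{2}\cos(\sqrt{2}\omega_i^\top z+\beta_i)$ instead of your cos/sin pairs, and that your query time $O(dL/\alpha^2)$ is the right one (the $O(d/\alpha^2)$ in the lemma statement appears to drop the $L$); note also a small slip in your write-up where the noise standard deviation should read $O(\Delta_1/\varepsilon)$ rather than $O(\sqrt{m}\cdot\Delta_1/\varepsilon)$, though your final bound $O(\sqrt{m}/(\varepsilon|X|))$ is correct.
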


We extend this result to show that the same $\varepsilon$-DP data structure can in fact be used for all the KDE functions $\{KDE_X^{[:\ell]}:\ell=1,\ldots,L\}$ simultaneously, by padding any query $y\in\R^{d\ell}$ into $\bar{y}\in\R^{dL}$ by placing zeros the missing dimensions, and query the DP KDE data structure for $KDE^{[:L]}_X(\bar y)$. This what enables our Logarithmic DP KDE Ensemble to be frugal in the number of DP KDE data structures. 

\begin{theorem}\label{thm:ensemblekde}
 In notation of \Cref{thm:dpkde}, suppose that every vector in $X$ is the concatenation of $L$ $d$-dimensional vectors of squared norm $u$. Then, for every $\ell\in\{1,\ldots,L\}$, and for every $y\in\R^{d\ell}$ which is the concatenation of $\ell$ $d$-dimensional vectors of squared norm $u$, the $\varepsilon$-DP data structure for $KDE^{[:L]}_X$ from \Cref{thm:dpkde} can be used to report $KDE^{[:\ell]}_X(y)$ up to an additive error of $\alpha\cdot e^{2u(L-\ell)}$ with probability $0.99$, by querying $KDE^{[:L]}_X(\bar y)$, where $\bar y\in\R^{dL}$ is the result of zero-padding $y$. 
\end{theorem}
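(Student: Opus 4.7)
\textbf{Proof proposal for \Cref{thm:ensemblekde}.} The plan is to show that zero-padding a short query $y$ into the full dimension $dL$ changes the KDE value only by a deterministic multiplicative factor, which depends on $\ell$ and $u$ but not on $x$ or $y$. Once this factor is isolated, the theorem reduces to rescaling the additive error guarantee of \Cref{thm:dpkde} and invoking DP post-processing.

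The first step is a direct block-wise computation. For any $x\in X$ written as the concatenation of $x^{[:\ell]}\in\R^{d\ell}$ and $x^{[\ell:]}\in\R^{d(L-\ell)}$, and for $\bar y\in\R^{dL}$ obtained from $y\in\R^{d\ell}$ by appending $L-\ell$ zero blocks of dimension $d$, we have
\[
 \norm{x-\bar y}_2^2 \;=\; \norm{x^{[:\ell]}-y}_2^2 + \norm{x^{[\ell:]}}_2^2 \;=\; \norm{x^{[:\ell]}-y}_2^2 + u(L-\ell),
\]
using the assumption that each of the $L-\ell$ suffix blocks of $x$ has squared norm $u$. Exponentiating and averaging over $X$ gives the key identity
\[
 KDE_X^{[:L]}(\bar y) \;=\; e^{-u(L-\ell)} \cdot KDE_X^{[:\ell]}(y).
\]
Thus, after querying the DP KDE for $KDE_X^{[:L]}(\bar y)$, multiplying by $e^{u(L-\ell)}$ recovers an estimate of $KDE_X^{[:\ell]}(y)$.

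For the error bound, \Cref{thm:dpkde} guarantees that with probability $0.99$ the returned value lies within additive $\alpha$ of $KDE_X^{[:L]}(\bar y)$. Multiplying both the estimate and the target by the deterministic scalar $e^{u(L-\ell)}$ preserves this and scales the additive error accordingly, yielding error at most $\alpha\cdot e^{u(L-\ell)}$, which is bounded by the claimed $\alpha\cdot e^{2u(L-\ell)}$. Privacy is immediate: we are not building a new data structure but only post-processing outputs of the $\varepsilon$-DP structure from \Cref{thm:dpkde} through a query-dependent but data-independent transformation (choosing $\bar y$ and multiplying by $e^{u(L-\ell)}$), so $\varepsilon$-DP is inherited via the post-processing property.

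The main subtlety, and the only place where care is needed, is verifying that the zero-padding interacts cleanly with the block-norm assumption: the reason the suffix term $\norm{x^{[\ell:]}}_2^2$ drops out as a single data-independent constant (rather than an $x$-dependent quantity) is precisely the uniform squared norm $u$ per block. This is also what motivates the logarithmic ensemble in the first place: if the padded fraction $(L-\ell)/L$ were allowed to be close to $1$, the multiplicative blow-up $e^{u(L-\ell)}$ in the error would be uninformative, but restricting to $\ell$ within a factor of $2$ of the ambient dimension of the DP KDE keeps $L-\ell = O(\ell)$ and the error under control.
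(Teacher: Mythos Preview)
Your proof is correct and in fact tighter than the paper's: you obtain additive error $\alpha\cdot e^{u(L-\ell)}$, while the paper only claims $\alpha\cdot e^{2u(L-\ell)}$. The key identity $KDE_X^{[:L]}(\bar y)=e^{-u(L-\ell)}KDE_X^{[:\ell]}(y)$ is the same in both arguments, but the routes diverge after that. You treat the DP KDE data structure as a black box via \Cref{thm:dpkde}, query it at $\bar y$, and rescale; the additive error simply picks up the single factor $e^{u(L-\ell)}$ from the rescaling. The paper instead opens up the Random Fourier Feature construction, identifies $\{f_i,g_i^{(\ell)}\}$ with $g_i^{(\ell)}(y)=e^{u(L-\ell)}g_i(\bar y)$ as a $(1,e^{u(L-\ell)},1)$-LSQ family for $k^{[:\ell]}$, and then invokes the generic LSQ error bound (Lemma~2.5 of \cite{wagner2023fast}), which squares the middle parameter and yields the looser $e^{2u(L-\ell)}$ factor. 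Your argument is more elementary and loses less; the paper's route has the advantage of making explicit that the privatized quantities $\tilde F_i(X)$ depend only on the left function $f_i$, which is shared across all $\ell$, but this is already implicit in your post-processing observation. For the purposes of the logarithmic-ensemble application (where only a constant blowup is needed anyway) the difference is immaterial, but your bound is the sharper one.
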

\begin{proof}
We recap the DP KDE construction of \cite{wagner2023fast}, which is based on Random Fourier Features \cite{rahimi2007random}. 
To construct the DP KDE data structure,
let $I=O(1/\alpha^2)$. For every $i=1,\ldots I$, 
sample $\omega_i\sim N(0,I_{dL})$ (a $dL$-dimensional vector of independent standard Gaussians), and a uniformly random $\beta_i\in[0,2\pi)$. Let $f_i:\R^{dL}\rightarrow[-\sqrt{2},\sqrt{2}]$ be defined as $f_i(z)=\sqrt{2}\cos(\sqrt{2}\omega_i^Tz+\beta_i)$, and let $g_i$ denote the same function as $f_i$ (the reason for this duplicate notation would become clear later, where our extention of this analysis would use different $f$ and $g$). Let $\tilde F_i(X)=\frac{1}{|X|}\sum_{x\in X}f_i(x)+\Lambda_i$, where $\Lambda_i\sim\mathrm{Laplace}(2\sqrt{2}I/\varepsilon)$. 
By the standard DP Laplace mechanism, the collection $\{\tilde F_i(X):i=1,\ldots,I\}$ is $\varepsilon$-DP and safe to release. 
Upon receiving a query $y\in\R^{dL}$, return $\frac{1}{I}\sum_{i=1}^I\tilde F_i(X)g_i(y)$ as the DP estimate for $KDE_{X}^{[:L]}(y)$. The analysis is based on the following lemma from \cite{wagner2023fast},
\begin{lemma}\label{lmm:dpkde}
If $\E[f_i(x)g_i(y)]=k^{[:L]}(x,y)$ for every $x,y\in\R^{dL}$ (where the randomness is over the sample of $\omega_i$ and $\beta_i$), the DP KDE estimate is accurate under the conditions of \Cref{thm:dpkde}.
\end{lemma}
Since \cite{rahimi2007random} showed that $\E[f_i(x)g_i(y)]=k^{[:L]}(x,y)$ for the above defined $f_i$ and $g_i$, it follows from \cref{lmm:dpkde} that the DP KDE estimate for $KDE_X^{[:L]}(y)$ is accurate.

Now, for some $\ell\in\{1,\ldots,L\}$, consider a prefix query $y\in\R^{d\ell}$ and its zero-padded completion $\bar y\in\R^{dL}$. Note that in our notation from above, $\bar y^{[:\ell]}=y$ and $\bar y^{[\ell:]}=0$. 
Consider what happens when we query the DP KDE structure for $\bar y$. For every fixed $i\in\{1,\ldots,I\}$ and $x\in X$, we have
\begin{align*}
 \E[f_i(x)g_i(\bar y)] = k^{[:L]}(x,\bar y) &= \exp(-\norm{x-\bar y}_2^2) & \text{by \cite{rahimi2007random}} \\
 &= \exp(-(\norm{x^{[:\ell]}-\bar y^{[:\ell]}}_2^2 + \norm{x^{[\ell:]}-\bar y^{[\ell:]}}_2^2)) & \\
 &= \exp(-(\norm{x^{[:\ell]}-y}_2^2 + \norm{x^{[\ell:]}}_2^2)) & \text{$\bar y^{[:\ell]}=y$ and $\bar y^{[\ell:]}=0$}\\
 &= \exp(-(\norm{x^{[:\ell]}-y}_2^2 + u(L-\ell))) & \norm{x^{[\ell:]}}_2^2=u(L-\ell) \\
 &= \exp(-\norm{x^{[:\ell]}-y}_2^2)\cdot \exp(-u(L-\ell)) & \\
 &= k^{[:\ell]}(x,y)\cdot \exp(-u(L-\ell)) . &
\end{align*}
Thus, if we now define $g_i^{(\ell)}:\R^{d\ell}\rightarrow\R$ as $g_i^{(\ell)}(y) = e^{u(L-\ell)} \cdot g_i(\bar y)$,
we get
\[ \E[f_i(x)g_i^{(\ell)}(\bar y)] = k^{[:\ell]}(x,y) . \]
In the terminology of \cite{wagner2023fast}, this means that $\{f_i,g_i^{(\ell)}\}$ is a $(1,e^{u(L-\ell)},1)$-LSQ family for the asymmetric kernel $k^{[:\ell]}$. \Cref{lmm:dpkde} now implies that the private KDE estimate we report is accurate up to an additive error of $\alpha\cdot e^{2u(L-\ell)}$, as the theorem claims (the $e^2{u(L-\ell)}$ blowup is due to the $e^{u(L-\ell)}$ term in the middle LSQ parameter; see Lemma 2.5 in \cite{wagner2023fast}). 
This holds for every $\ell=1,\ldots,L$. Furthermore, the left-function $f_i^{\ell}$ in the LSQ family is the same ($f_i$) for every $\ell$, and the privatized values released by the DP KDE mechanism, $\{\tilde F_i(X):i=1,\ldots,I\}$, depend only the left function. Therefore, the same DP KDE mechanism can be used simultaneously for all the KDE functions $KDE^{[:\ell]}_X$, i.e., for all subsequence lengths $\ell=1,\ldots,L$.
\end{proof}

\noindent\emph{Bandwidth selection.}
We now explain how to deal with the error blowup $e^{2u(L-\ell)}$ incurred in \Cref{thm:ensemblekde}. 
This has to do with the notion of bandwidth in KDE. 
Normally, the Gaussian KDE is defined with a bandwidth parameter $\sigma>0$, as $KDE_X(y)=\frac{1}{|X|}\sum_{x\in X}\exp(-\norm{y-x}_2^2/\sigma^2)$. 
Note that selecting a bandwidth $\sigma$ is equivalent to multiplying all vectors $X$ and $y$ by the scalar $\sigma$. 
Thus, the purpose of the bandwidth is to offset the length of the input vectors, and ensure that the KDE function around any point in $X$ decays not too fast and not too slowly, rendering its density scores meaningful. 

In the case of a pre-trained embedding model that returns unit-length embeddings (like many commonly used ones do), one simply sets the bandwidth to $1$. In our case, when we deal with the concatenation of $\ell$ unit-length embeddings, the Euclidean norm of the concatenation is $\sqrt{\ell}$, and thus we wish to set the bandwidth to $\sigma=1/\sqrt\ell$. This is equivalent to setting $u=1/\ell$ in the notation of \Cref{thm:ensemblekde} (recall that $u$ is the squared Euclidean norm of every $d$-dimensional block in the concatenation).

Thus, to get the desired bandwidth for prefix length $\ell$ (i.e., to ger meaningful density scores from $KDE^{[:\ell]}_X$), ideally we would normalize the given $d$-dimensional embeddings to length set $u=1/\ell$. 
However, if we use a single DP KDE data structure, we can only choose one setting of $u$ for all values of $\ell$. This is the motivation for using logarithmically many DP KDE data structures, leading to the logarithmic ensemble from \Cref{sec:method_main}. 
To explain this, suppose we build a DP KDE data structure for $KDE^{[:L]}_X$, and set $u=2/L$. Let $\ell$ be such that $\ell\geq L/2$. 
On the one hand, the ideal setting of $u$ for $\ell$ would have been $1/\ell$, while the actual setting $2/L$ is in $[1/\ell, 2/\ell]$. Thus, the bandwidth is off by only a constant, which suffices for the KDE density scores to still be meaningful. 
On the other hand, the error blowup $e^{2u(L-\ell)}$ is now at most $e^2$, i.e., \Cref{thm:ensemblekde} incurs only a constant blowup in the error. Hence, if we pad no more than half of the blocks of a KDE query with zeros, we get both approximately the desired bandwidth and the desired error blowup. To achieve this for every $\ell$, we build $O(\log L)$ many DP KDE data structures, such that for every $\ell$ we can query one that requires no more than doubling the dimension of the query by zero padding, as described in detail earlier in this section. This is the motivation for the Logarithmic Ensemble, and this is how it achieves private and accurate DP KDE estimates for all prefix lengths with a logarithmic number of KDE data structures.

\section{Additional Experiments}\label{sec:experiments_appendix}

\subsection{Experimental Details}
\subsubsection{Medical Task Details}\label{app:mimic}
\noindent\textit{Groundtruth labels.}
As the groundtruth labels for the binary classification task on MIMIC, we consider as the positive class all medical records whose billing code includes an ICD code associated with heart failure. These are the codes 428.x (for any x) in ICD-9,\footnote{\url{http://www.icd9data.com/2012/Volume1/390-459/420-429/428/default.htm}}
and the codes I50.x in ICD-10.\footnote{\url{https://www.icd10data.com/ICD10CM/Codes/I00-I99/I30-I5A/I50-}}

\noindent\textit{Clinical Vocabulary.}
Physicians express the same concept in different ways in clinical notes. For example, the term hypertension may be expressed as ‘high blood pressure’ or ‘elevated blood pressure’. Abbreviations are also common such as ‘htn’ or ‘hbp’. Ideally, we would treat these terms as the concept. Fortunately, the UMLS has a meta-thesaurus with 3.2M English medical concepts. Each concept, known as a CUI, has a set of keyphrases associated with it.
To limit the vocabulary size, we chose a subset of common concepts based on SemMedDB \cite{kilicoglu2012semmeddb}, a relational graph over these CUIs based on the PubMed database. The graph contains relationships such as ``‘Lisinopril’ is a medication for ‘high blood pressure’''. We chose concepts that were supported by at least 10 PubMed papers. This resulted in 386,725 CUIs that formed the final public vocabulary for our MIMIC experiments.

\subsubsection{Domain Adaptation Details}\label{sec:deep_coral_details}
For the domain adaptation step in our implementation of \alg, we use the Deep CORAL method of \cite{sun2016deep}. We now describe it in more detail. 

Recall the setting: Hospital A has generated a privacy-preserving synthetic corpus $D_A$ of labeled texts, which is safe for release, and has sent it to Hospital B. Hospital B now intends to train a downstream ML model $\mathcal{M}$ on $D_A$ for some specific task (say, classification), and then use it for inference on its own unlabeled text corpus, $D_B$. However, since the texts in $D_A$ might have different characteristics than $D_B$ (e.g., a different formats of medical records), domain adaptation is needed in order to use $D_A$ as the training set for $\mathcal{M}$ and yet get good inference accuracy on $D_B$. 

Let $\ell_{\mathcal M}$ be the loss used to train $\mathcal{M}$ (say, classification loss). Deep CORAL modifies $\ell_{\mathcal M}$ into a new loss $\ell_{\mathcal M}'$ by adding an additive domain adaptation loss term, defined as $1/(4d^2\norm{C_A-C_B}_F^2)$, where $C_A$ and $C_B$ are the $d\times d$ covariance matrices constructed from $D_A$ and $D_B$ respectively. $\mathcal{M}$ is trained with the modified loss $\ell_{\mathcal M}'$ instead of the original loss $\ell_{\mathcal M}$. See \cite{sun2016deep} for further details.

This method requires Hospital B to allocate a set $D_B$ of its medical records for training  $\mathcal{M}$. The size we set for $D_B$ is half the size of $D_A$ (thus, $|D_B|$ contains 7k texts for DBPedia-14, and 3k texts for MIMIC-IV). The test set on which we evaluate the trained model $\mathcal{M}$ for the accuracy results we report is kept separate from the set $D_B$ used for training $\mathcal{M}$ with domain adaptation, to ensure $\mathcal{M}$ has no access to any test records during training.

\subsubsection{AugPE Details}\label{sec:augpe_details}
In all invocations of AugPE, we use 10 epochs, similarly to \cite{api2}. Thus, AugPE uses 10 sequential prompts to the LLM per synthetic text generated. To vary the prompt budget of AugPE (or AugPE-$\widetilde{\mathcal{V}}$), we generate as many synthetics texts as possible under the budget in 10 epochs (thus, with a prompt budget $B$, we generate $B/10$ synthetic documents with AugPE). 
We have also run experiments varying AugPE's prompt budget by using less epochs (which allows generating more documents with the same prompt budget; namely, with a prompt budget $B$ and $M<10$ epochs, one can generate $B/M$ synthetic texts). However, this led to degraded performance compared to fixing the number of epochs to 10 as above---which stands to reason, since evolution over epochs is the key idea in AugPE---and thus we report results with the better variant of the algorithm.



\subsection{Additional Experimental Results}

\subsubsection{Ablation: Domain Adaptation}
\Cref{tbl:domainadaptation} shows that when domain adaptation (see \Cref{sec:method_main}) is removed from \alg, downstream accuracy is significantly degraded.

\begin{table*}
\caption{Results with and without domain adaption (abbrev.~DA).} \label{tbl:domainadaptation}
\centering
\begin{tabular}{c c c c c}
\toprule
 & \multicolumn{2}{c}{MIMIC} & \multicolumn{2}{c}{DBPedia-14} \\
 $\varepsilon_{\mathrm{total}}$ & with DA & w/o DA & with DA & w/o DA \\
\midrule
\rule{0pt}{3ex}
6 & 71.6\% & 58.7\% & 77.9\% & 72.8\% \\
10 & 71.7\% & 58.4\% & 80.4\% & 77.1\% \\
11 & 72.2\% & 55.0\% & 82.4\% & 77.0\% \\
15 & 72.2\% & 60.3\% & 85.1\% & 81.2\% \\
\bottomrule
\end{tabular}
\end{table*}

\subsubsection{Ablation: No LLM}\label{sec:nollm}
We consider the ablation of eliminating the LLM from \alg. 
This pertains to the question of what is the role of the LLM in the synthetic text generation pipeline. Clearly, it plays a role in \emph{form}, by incorporating the sequences of isolated phrases into natural language. However, potentially, it could also play a role in \emph{function}: either by augmenting and enriching the synthetic texts with knowledge about the world from its pre-training data, which could assist downstream tasks, or conversely, by adding ``noise'' that would only interfere. 
Note that is it generally not at all clear (and may depend on the specific context) whether it is desirable or not for the LLM to affect the output texts beyond their form. For example, in medical contexts, generic information about the general population from the pre-training data might obscure predictive attributes in datasets that target specific sub-populations.

To test this, we run downstream classification directly on the generated phrases sequences, instead of using them to seed an LLM prompt. Each test record is also turned into a (non private) sequence of phrases from the vocabulary, to be compatible in form with the raw sequence training set. 

The results are in \Cref{tbl:nollm}. They show that on DBPedia-14, the accuracy remains similar with or without the LLM. On MIMIC, the classification accuracy of the sequences is in fact better. However, this gap might not be due to the LLM, since the same gap is observed between classifying the original texts versus classifying the sequences extracted from them (the final lines in \cref{tbl:mainres,tbl:nollm},
respectively). 
The reason is that medical records are long and packed with a lot of free text, and classification becomes easier once key medical terms from a standardized glossary (UMLS) are extracted from the text. To summarize, we do not observe evidence that the LLM substantially affects downstream accuracy either positively or negatively, and the role it plays seems to be primarily in the form of the output texts.


\begin{table*}
\caption{Classification accuracy on the generated phrase sequences, without running them through the LLM.} \label{tbl:nollm}
\centering\begin{tabular}{c c c c c}
\toprule
$\varepsilon_{\mathrm{total}}$ & $\varepsilon_{\mathrm{voc}}$ & $\varepsilon_{\mathrm{kde}}$ & MIMIC & DBPedia-14 \\
\midrule
\rule{0pt}{3ex}
6 & 1 & 5 & 75.1\% & 71.7\% \\
10 & 5 & 5 & 74.9\% & 81.5\% \\
11 & 1 & 10 & 75.5\% & 80.6\% \\
15 & 5 & 10 & 75.9\% & 84.2\% \\
\midrule
\multicolumn{3}{c}{Original training set} & 80.0\% & 85.2\% \\
\bottomrule
\end{tabular}
\end{table*}

\subsubsection{Ablation: Only LLM}
The flipside of the previous ablation is cutting out everything \emph{but} the LLM from the pipeline, not using private data at all. Perhaps the LLM has enough knowledge from its pre-training set to generate the requisite texts for the downstream task, without needing any private data?

To test this, we prompted the LLM without seeding to generate documents for each class of each dataset, and ran downstream classification on the generated texts. Note that here, unlike our \alg\ experiments, we did include the intended class in the prompt (say, ``generate a medical record of a patient with a heart condition''), since without phrase seeding, there is no other input for the prompt to rely on. 

The resulting texts were extremely non-diverse (e.g., all ``mean of transportation'' articles were about Honda Civic). 
The downstream classification accuracy was 57\% on MIMIC and 60\% on DBPedia-14. While non-trivial, these accuracies fail to match those based on \alg\ seeded prompts. 
This result aligns with and corroborates a similar finding in \cite{eldan2023tinystories}, discussed in \Cref{sec:intro_method}.
%

\subsubsection{Sequence Generation Method}

We compare independent versus iterative sequence generation (see \Cref{sec:method_main}). The results vary substantially on the two datasets: while on MIMIC independent generation works much better, the situation is reversed on DBPedia-14. There are a few possible causes: one is that MIMIC medical records are long and detailed, and cover many medical aspects not directly related to each other nor to the classification task, while DBPedia-14 entries are focused and concise. Another is that the MIMIC dataset is much smaller, perhaps too small to display strong correlations between longer tuples of phrases (particularly in the presence of DP noise). The results imply that the choice between the method is dataset-dependent, and is best done via validation. Note that validation does not require costly interaction with the LLM, and can be done cheaply directly on the generated sequences, as done in the experiment in \Cref{tbl:nollm}.

\begin{table}
\caption{Independent vs.~iterative phrase generation.} \label{tbl:iidvsiter}
  \centering
\begin{tabular}{c c c c c}
\toprule
 & \multicolumn{2}{c}{MIMIC} & \multicolumn{2}{c}{DBPedia-14} \\
 $\varepsilon_{\mathrm{total}}$ & ind. & iter. & ind. & iter. \\
\midrule
\rule{0pt}{3ex}
6 & 71.6\% & 51.3\% & 30.2\% & 77.9\% \\
10 & 71.7\% & 61.5\% & 26.3\% & 80.4\% \\
11 & 72.2\% & 55.3\% & 29.0\% & 82.4\% \\
15 & 72.2\% & 55.5\% & 27.1\% & 85.1\% \\
\bottomrule
\end{tabular}
\end{table}

\subsubsection{Few-Shot Prompting}\label{app:fewshot}

As mentioned in \Cref{sec:method_main}, the privatized documents generated are intended to be used by a client for downstream machine learning tasks on their own set of documents. The synthetic documents that are produced by the LLM might look different compared to the test set. While one way to overcome this is through domain adaptation as described in \Cref{sec:deep_coral_details}, another approach is through few-shot prompting. 
In this approach, the client provides a few examples from their dataset in the desired format. These examples are then included in the prompt along with the synthetic keyphrases given to the LLM. The idea here is to nudge the LLM to generate documents that are aligned with the client's test set. 

We explore this post-processing method on the MIMIC and DBPedia-14 text classification tasks. We provide six example texts (3 from each class) from the dataset along with their corresponding keyphrases in the prompt. We ensure that these few-shot reports are not in the test-set used in our experiments. We compute the downstream task performance of a classifier trained on the synthetic data with few-shot prompting. The downstream results using the different combinations of post-processing methods are shown in \Cref{tbl:fs_results_mimic} and \Cref{tbl:fs_results_dbpedia}. We find that few-shot prompting along with domain adaptation results in the best downstream performance. 

To gauge the effect of few-shot prompting qualitatively, we visualize the t-SNE plot of the BioBERT embeddings of the real MIMIC data, the synthetic data (DP-KPS outputs w/o DA) and the synthetic data with few-shot prompting in \Cref{fig:tsne_biobert}. Our intuition for using few-shot prompting was to make the styles of the real and synthetic data similar. We see in \Cref{fig:tsne_biobert} that few-shot prompting brings the embeddings closer to the real data. 
 
\begin{table*}
\caption{Accuracy of a downstream classifier trained on DP-KPS outputs on MIMIC with different combinations of post-processing methods. These results are for the setting where $\varepsilon_{\mathrm{voc}}=1$ and $\varepsilon_{\mathrm{kde}} = 5$ } \label{tbl:fs_results_mimic}
\centering
\begin{tabular}{c c c}
\toprule
Domain adaptation & Few-shot prompting & Accuracy \\
\midrule
\rule{0pt}{3ex}
{\text{\sffamily X}} & {\text{\sffamily X}} & 58.7\% \\
\checkmark & {\text{\sffamily X}} & 71.6\% \\
{\text{\sffamily X}} & \checkmark & 71.0\% \\
\checkmark & \checkmark & 72.37\% \\
\bottomrule
\end{tabular}
\end{table*}

\begin{table*}
\caption{Accuracy of a downstream classifier trained on DP-KPS outputs on DBPedia with different combinations of post-processing methods. These results are for the setting where $\varepsilon_{\mathrm{voc}}=1$ and $\varepsilon_{\mathrm{kde}} = 10$ } \label{tbl:fs_results_dbpedia}
\centering
\begin{tabular}{c c c}
\toprule
Domain adaptation & Few-shot prompting & Accuracy \\
\midrule
\rule{0pt}{3ex}
{\text{\sffamily X}} & {\text{\sffamily X}} & 77.0\% \\
\checkmark & {\text{\sffamily X}} & 82.4\% \\
{\text{\sffamily X}} & \checkmark & 79.6\% \\
\checkmark & \checkmark & 83.3\% \\
\bottomrule
\end{tabular}
\end{table*}

\begin{figure}
 \centering
 \includegraphics[scale=0.45]{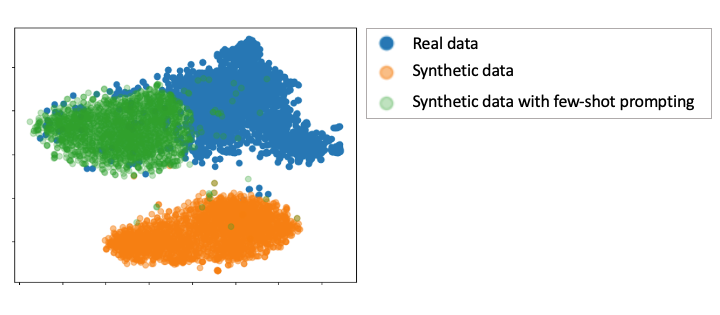}
 \caption{t-SNE plot of the BioBERT embeddings of real MIMIC data, synthetic data (DP-KPS output w/o DA) and synthetic data with few-show prompting.}
 \label{fig:tsne_biobert}
\end{figure}

\subsection{Examples of Generated keyphrase Sequences and LLM Outputs}
\Cref{tbl:example2,tbl:example1full} complete \Cref{tbl:example1} by showing positive and negative examples for all classes of the DBPedia-14 dataset. 

\begin{table*}
\caption{\textcolor{goodgreen}{Good} and \textcolor{badred}{bad} examples for classes 1--7 in DBPedia-14, with $\varepsilon_{\mathrm{voc}}=1$ and $\varepsilon_{\mathrm{kde}}=10$.} \label{tbl:example1full}
{\renewcommand{\arraystretch}{2}
\begin{centering}
\setlength\extrarowheight{-3pt}
\small
\begin{adjustwidth}{-0.5in}{-0.1in}
\begin{tabular}{p{0.08\linewidth}p{0.92\linewidth}}
\toprule
 Class & \alg\ results: Keyphrase sequence and corresponding LLM output \\
\midrule
Company &

['plants', 'company', 'distribution', 'operating', 'social', 'ships', 'andrew', 'based', 'michigan', 'york']: \newline 
\textcolor{goodgreen}{Plants Distribution Company is a Michigan-based company operating ships to distribute plants and other goods. Founded by Andrew York in New York, the small social enterprise distributes products across the region.} \\
&
['director', 'institution', 'royal', 'schools', 'municipality', 'club', 'journal', 'johnson', 'office', 'oil']: \newline 
\textcolor{badred}{John Johnson was appointed director of the Royal Schools institution in the municipality in 2020. He previously worked in the office of the Oil Club journal.} \\
\midrule

Educational institution
&
['1908', 'school', 'school', 'college', 'institute', 'school', 'germany', 'university', 'college', 'manufacturing']: \newline 
\textcolor{goodgreen}{The College Institute was founded in Germany in 1908 as a manufacturing school. Originally a vocational college, it later became a university specializing in engineering and technology. The school focuses on hands-on training in manufacturing techniques.} \\
&
['scott', 'school', 'brazilian', 'guard', 'sciences', '1996', 'high', 'lee', 'mystery', 'england']: \newline 
\textcolor{badred}{Scott Lee was born in England in 1996. He attended a Brazilian jiu-jitsu school in high school where he learned grappling and the martial art of Brazilian jiu-jitsu. The mystery sciences behind the techniques fascinated Scott during his time there.}\\
\midrule

Artist &

['species', 'february', '1944', 'european', '1929', 'music', 'musician', 'duo', 'frank', 'light']: \newline 
\textcolor{goodgreen}{The musical duo Frank \& Light was formed in February 1944 in Europe by Frank Smith and Light Williams. The duo, known for their unique blend of jazz and classical music, released their debut album "Species of Music" in 1929 to critical acclaim. Though popular in the late 1940s, they disbanded as a musical act in the early 1950s.} \\

& ['england', 'june', '1972', '2003', 'english', 'animated', 'nonfiction', 'publisher', 'utah', 'november']: \newline 
\textcolor{badred}{In November 2003, an English animated nonfiction publisher based in Utah released a book about events that took place in England in June 1972. The book was well-received upon its November 2003 publication. }\\

\midrule
 Athlete &

['early', 'born', '1966', '1970', 'class', 'professional', 'playing', 'plays', 'lee', 'brown']: \newline 
\textcolor{goodgreen}{Lee Brown was born in 1966 and grew up playing sports from an early age. He was a standout athlete in high school and went on to play professionally in the late 1980s and early 1990s before retiring in his late 20s. After retiring from playing, Brown began coaching youth sports and teaching physical education classes. } \\

&

['movie', 'october', '1988', 'irish', 'british', 'footballer', 'league', 'attack', '1951', 'formed']: \newline 
\textcolor{badred}{The 1988 Irish movie October follows a fictional British footballer in the Irish league who was attacked in 1951. The footballer then forms a new team in October 1988.} \\

\midrule
Office holder &

['china', 'march', '1948', '1999', 'politician', 'canadian', 'national', 'leader', 'anthony', 'district']: \newline 
\textcolor{goodgreen}{In March 1948, Anthony, a Canadian politician and national leader, was born in a district of China. In 1999, he became the leader of his district.} \\
&
['chapel', 'january', 'november', '1947', 'kentucky', 'senior', 'year', 'journalist', 'democratic', 'hop']: \newline 
\textcolor{badred}{The Little Chapel was founded in January 1947 in Kentucky by a senior journalist from the Democratic Party who hoped it would last. The Little Chapel opened its doors in November 1947 but closed within a year due to lack of funding. The Little Chapel story was covered by the local Kentucky paper in 1947. } \\

\midrule
 Mean of transportation &

['malayalam', 'mexican', 'ocean', 'car', 'submarine', 'finnish', 'frigate', 'called', 'sea', 'union']: \newline 
\textcolor{goodgreen}{The Malayalam is a Mexican frigate that operates in the Pacific Ocean. The sea vessel, called the Union, carries specially designed submarines and cars for traversing the ocean floor. The Finnish captain aims to better understand deep sea life. }\\
&
['girls', 'ship', 'oil', 'automobile', 'navy', 'massachusetts', 'ships', 'islands', 'military', 'wales']: \newline 
\textcolor{badred}{The girls were on a ship transporting oil to Wales when it was attacked by the Massachusetts navy near some islands. The military ships fired on the automobile transport ship until it sank.}
\\

\midrule
 Building
&
['career', 'museum', 'tower', 'representing', 'headquarters', 'residence', 'japanese', 'elected', 'music', 'forest']: \newline 
\textcolor{goodgreen}{The Japanese music museum's tower, located in a forest, was the former residence and career headquarters of an elected official representing Japanese culture before becoming a museum. The tower in the forest now serves as the music museum's main headquarters. }\\
&
['mexico', 'city', 'headquartered', 'west', 'chapel', 'news', 'game', 'alpine', 'northwest', 'comprehensive']: \newline 
\textcolor{badred}{The city of Alpine is located in northwest Mexico and is the headquarters of West Chapel News, a company that publishes a comprehensive game and news website. }\\

\bottomrule
\end{tabular}
\end{adjustwidth}
\end{centering}}
\end{table*}

\begin{table*}
\caption{\textcolor{goodgreen}{Good} and \textcolor{badred}{bad} examples for classes 8--14 in DBPedia-14, with $\varepsilon_{\mathrm{voc}}=1$ and $\varepsilon_{\mathrm{kde}}=10$.} \label{tbl:example2}
{\renewcommand{\arraystretch}{2}
\begin{centering}
\setlength\extrarowheight{-3pt}
\small
\begin{adjustwidth}{-0.5in}{-0.1in}
\begin{tabular}{p{0.08\linewidth}p{0.92\linewidth}}
\toprule
 Class & \alg\ result: Keyphrase sequence and corresponding LLM output \\
\midrule
Natural place &

['1955', 'mountain', 'lakes', 'province', 'carolina', 'north', 'north', 'jersey', 'maryland', 'region']: \newline 
\textcolor{goodgreen}{In 1955, the mountain lakes region spanning parts of North Carolina, North Jersey, and Maryland was designated as a new province. This area contains beautiful mountain lakes and forests, and the region attracts many tourists.} \\
&
['presbyterian', 'mountain', 'swimmer', 'east', 'forest', 'rapper', 'lakes', 'international', 'arts', 'india']: \newline 
\textcolor{badred}{The Presbyterian Mountain Swimmer is an international rapper and performance artist known for swimming across lakes and forests in East India. His arts performances often address social issues in local mountain communities near the lakes and forests where he swims.} \\
\midrule

Village
&
['2009', 'village', 'atlantic', 'spring', 'southern', 'folk', '1986', 'area', 'female', 'big']: \newline 
\textcolor{goodgreen}{The village of Big Spring is located in the southern area of the Atlantic coast. Founded in 1986, it has become a popular tourist destination known for its folk music scene. In 2009, a local female musician released a popular album that brought more visitors to the village.} \\
&
['1988', 'district', 'pass', 'general', 'manufacturing', 'power', '1953', 'landing', 'leader', 'cemetery']: \newline 
\textcolor{badred}{In 1953, the district leader passed a general manufacturing power bill to allow more factories to be built in the area. However, residents complained about pollution from the new factories near the local cemetery. So in 1988, the district passed new restrictions on factory emissions to address residents' concerns over poor air quality.}\\
\midrule

Animal &

['martin', 'species', 'african', 'patrol', 'creek', 'portuguese', 'asia', 'ukrainian', 'plants', '1927']: \newline 
\textcolor{goodgreen}{The African creek martin (Riparia africanus) is a species of small insectivorous bird in the swallow family. First described in 1927 by a Portuguese naturalist, the creek martin nests in burrows dug into the banks of creeks and rivers across sub-Saharan Africa. Though once abundant, habitat loss has caused populations to decline, leading conservation groups like the Ukrainian Wildlife Patrol to list the species as near threatened.} \\

& ['feet', 'species', 'argentine', 'public', 'metal', 'russian', 'bay', 'lakes', 'governor', 'wild']: \newline 
\textcolor{badred}{The public metal sculptures in Lakes Bay have become a popular tourist attraction. Featuring over 50 feet tall structures depicting various species, the sculptures were a gift from the Russian governor to the people of Argentine. However, some environmental groups have expressed concern that the wild installations could negatively impact the local wildlife.}\\

\midrule
 Plant &

['plant', 'flowering', 'plant', 'family', 'common', 'danish', 'northwestern', 'india', 'norway', 'church']: \newline 
\textcolor{goodgreen}{The flowering plant family Commonaceae is commonly found in northwestern India, Norway, and other Northern European countries. The plant is known for its use in traditional medicine and as decoration in churches across Denmark and northern India.} \\

&

['june', 'plant', 'plant', 'fish', 'czech', 'ohio', 'multinational', 'western', 'east', 'representing']: \newline 
\textcolor{badred}{In June 2023, a multinational corporation headquartered in Ohio announced plans to build a new manufacturing plant in the Czech Republic, representing the company's expansion into Eastern Europe. The plant, to be located west of Prague, will produce parts used in automotive and aerospace industries. Environmental activists raised concerns about the plant's potential impact on local fish populations in nearby rivers.} \\

\midrule
Album &

['pass', 'album', 'album', 'released', 'duo', 'release', 'album', 'early', '1999', '2004']: \newline 
\textcolor{goodgreen}{The album Pass was released in early 1999 by the duo. This was their first album release, coming five years after they formed in 2004. The album contains 10 songs in the pop genre.} \\
&
['operated', 'album', 'album', 'distributed', 'hits', 'title', 'music', 'michael', 'shrub', 'lee']: \newline 
\textcolor{badred}{Michael Shrub is an American musician who released the album "Evergreen Hits" in 2022, which was distributed by Lee Records. The album contains several popular songs that showcase Shrub's smooth vocals and guitar playing.} \\

\midrule
Film &

['published', '1966', 'film', 'norwegian', 'director', 'stage', 'food', 'light', 'sports', 'production']: \newline 
\textcolor{goodgreen}{The 1966 Norwegian film "Bright Lights" was directed by famous stage director Lars Berg and produced by Nordic Productions. The sports comedy film about competitive eating features innovative lighting and sumptuous food.}\\
&
['named', '1966', 'film', 'swedish', 'actress', 'operates', 'animated', 'lit', '1986', 'music']: \newline 
\textcolor{badred}{The Swedish actress named Greta was born in 1966 and first operated an animated film lit in 1986. She later began making music.}
\\

\midrule
 Written work

&
['comic', 'english', 'nonfiction', 'newspaper', '1965', 'institute', '1992', 'documentary', 'jack', '1990']: \newline 
\textcolor{goodgreen}{The English comic strip Jack debuted in 1965 in a nonfiction newspaper published by the Institute. The comic ran until 1992 and was known for its documentary-style depiction of everyday life. In 1990, the Institute published a documentary about the 25-year history of the Jack comic strip.}\\
&
['songwriter', 'language', 'period', 'greater', 'louisiana', 'privately', 'literary', 'france', 'multinational', 'ireland']: \newline 
\textcolor{badred}{John Smith (songwriter) was a privately educated language teacher from Ireland who moved to France during the greater Louisiana Purchase period. He wrote literary works while working for a multinational corporation before returning to Ireland. }\\

\bottomrule
\end{tabular}
\end{adjustwidth}
\end{centering}}
\end{table*}

\newpage
\section{Impact Statement}
Our work suggests the potential use of LLM generated texts as synthetic data in machine learning tasks, as a manner of preserving privacy. Along with the many potential benefits of LLMs, using them is prone to known risks like hallucinations, misinformation, and introduction of biases from the pre-training set into the generated data. 
AI generated data is automated and probabilistic, and may often be inaccurate or inappropriate. 
These risks are present in virtually any use of generative AI models and one should always be mindful of them, but our view is that the specific use our work proposes, of using them to protect privacy, does not encompass substantial risks beyond those always present in generative model usage. 

A best practice for prospective users of our method would be to validate that the generated data faithfully represents the essential attributes of the original data (for example by comparing it to the original data in sample downstream learning tasks) prior to releasing it, and to transparently convey and emphasize with its release that the privatized data, while seeded with real private data, had been automatically generated by an AI generative model.

On the client side, a responsible use of AI generated data (by our method or any other one), particularly for any consequential decisions with impact on humans, should implement appropriate human oversight, testing, and other use case-specific safeguards to mitigate the associated risks.

\end{document}